\newcommand{\children}{\ensuremath{\mathsf{ch}}}
\newcommand{\cluster}{\ensuremath{\mathcal{C}}}
\newcommand{\hc}{\ensuremath{\mathcal{T}}}
\newcommand{\datapoint}{\ensuremath{x}}
\newcommand{\lbl}{\ensuremath{y}}
\newcommand{\dataset}{\ensuremath{\mathcal{X}}}
\newcommand{\labelset}{\ensuremath{\mathcal{Y}}}
\newcommand{\dataenc}{\ensuremath{f_\theta}}
\newcommand{\lblenc}{\ensuremath{f_\phi}}
\newcommand{\dataencprime}{\ensuremath{f_{\theta'}}}
\newcommand{\lblencprime}{\ensuremath{f_{\phi'}}}
\newcommand{\lblencPTstep}[1]{f_{\phi_{#1}}(\lbl)}
\newcommand{\lblencPTstepNoLbl}[1]{f_{\phi_{#1}}}
\newcommand{\loss}{\ensuremath{\mathcal{L}}}
\newcommand{\ip}[2]{\ensuremath{\langle #1, #2\rangle}}
\newcommand{\dimensionality}{\ensuremath{d}}
\newcommand{\allparameters}{\ensuremath{\Theta}}
\newcommand{\dataparameters}{\ensuremath{\theta}}
\newcommand{\numtraining}{\ensuremath{N}}
\newcommand{\invtemp}{\ensuremath{\beta}}
\newcommand{\lblparameters}{\ensuremath{\phi}}
\newcommand{\E}{\ensuremath{\EE}}
\newcommand{\nystrom}{Nystr\"{o}m\xspace}
\newcommand{\numlandmarks}{\ensuremath{d'}}
\newcommand{\approxsoftmax}{\ensuremath{Q}}
\newcommand{\partition}{\ensuremath{\mathscr{C}}}
\newcommand{\bigo}{\ensuremath{\mathcal{O}}}
\newcommand{\acceptanceratio}{\textsf{A}}
\newcommand{\ours}{\textsc{D}y\textsc{nnibal}\xspace}
\newcommand{\mycomment}[1]{{\color{aurometalsaurus} $\rhd$ #1}}
\definecolor{orange}{HTML}{E07B3E}
\definecolor{purple}{HTML}{775EDB}
\definecolor{green}{HTML}{2D8950}
\definecolor{blue}{HTML}{1858AB}
\definecolor{red}{HTML}{A62216}
\definecolor{teal}{HTML}{009CAD}
\definecolor{ashgrey}{rgb}{0.7, 0.75, 0.71}
\definecolor{aurometalsaurus}{rgb}{0.43, 0.5, 0.5}
\newcommand{\kibitz}[2]{\ifnum\Comments=1\textcolor{#1}{#2}\fi}
\begin{document}

\runningtitle{Improving Dual-Encoder Training through Dynamic Indexes for Negative Mining}

\twocolumn[

\aistatstitle{Improving Dual-Encoder Training through \\ Dynamic Indexes for Negative Mining }

\aistatsauthor{Nicholas Monath \And Manzil Zaheer \And  Kelsey Allen \And Andrew McCallum }

\aistatsaddress{ Google Research \And  Google DeepMind \And Google DeepMind \And Google Research }

]

\begin{abstract}
\vspace{-3mm}
Dual encoder models are ubiquitous in modern classification and retrieval. Crucial for training such dual encoders is an accurate estimation of gradients from the partition function of the softmax over the large output space; this requires finding negative targets that contribute most significantly (``hard negatives''). Since dual encoder model parameters change during training, the use of traditional static nearest neighbor indexes can be sub-optimal.  These static indexes (1) periodically require expensive re-building of the index, which in turn requires (2) expensive re-encoding of all targets using updated model parameters. This paper addresses both of these challenges. First, we introduce an algorithm that uses a tree structure to approximate the softmax with provable bounds and that dynamically maintains the tree. Second, we approximate the effect of a gradient update on target encodings with an efficient Nystr\"{o}m low-rank approximation. In our empirical study on datasets with over twenty million targets, our approach cuts error by half in relation to oracle brute-force negative mining. Furthermore, our method surpasses prior state-of-the-art while using 150x less accelerator memory.
\end{abstract}
\vspace{-3mm}
\section{INTRODUCTION}

Dual encoder models map input queries and output targets to a common vector space in which inner products of query and target vectors yield an accurate  similarity function.  
They are a highly effective, widely deployed solution for classification and retrieval tasks 
such as passage retrieval \citep{karpukhin2020dense}, question answering \citep{qu2021rocketqa}, recommendation systems \citep{wu2020joint}, entity linking \citep{gillick2019learning}, and fine-grained  classification \citep{xiong2022extreme}.
These tasks are characterized as having a large number of targets, often on the order of millions to billions.
Dual encoders achieve scalability to this large number of targets in two ways: weight sharing among targets through a parametric encoder and an efficient inner product-based scoring function.
The encoder models are often parameterized as deep neural networks, e.g., transformers \citep{vaswani2017attention,devlin2019bert}, and trained with the cross-entropy loss between the model's softmax distribution and query's true labeled target(s).

Training dual encoder models \emph{efficiently} and \emph{effectively} poses two key challenges:

\noindent\textbf{Computationally intensive loss function.} Computing the gradient of the softmax partition function becomes computationally costly when the number of possible targets is large \citep[inter alia]{bengio2008adaptive,daume2017logarithmic,lindgren2021efficient}, necessitating approximation.  
The common approach approximates the large sum in the partition function gradient by sampling relatively few of its largest terms originating from ``hard'' negative targets, which are often found using an efficient nearest-neighbor index \citep{guu2020retrieval,agarwal2022entity}.
However, the approximation introduces bias in gradient estimation affecting learning and resulting accuracy~\citep{rawat2019sampled, ajalloeian2020convergence}.

\noindent\textbf{Moving embeddings.}
The embedded representations of both queries and targets continuously change during training as the underlying encoder parameters are updated. 
Since re-embedding all targets after each step of training is computationally infeasible,
prior work uses `stale' representations for negative mining, i.e., the vector representation from the encoder parameters from $t$ steps ago.  
Re-encoding and re-indexing even at moderately-sized intervals remains an expensive operation \citep{izacard2022few}. 

In this paper, we present a new dual-encoder training algorithm, which we call \ours,
that addresses both of the above challenges, 
and we provide both theoretical and empirical analysis. To elaborate:

\noindent\textbf{Efficient gradient bias reduction}. 
The expensive term in the exact gradient computation is the evaluation of an expectation with respect to the entire softmax distribution.
To approximate this expectation, we design a Metropolis-Hastings sampler that uses a tree-structured hierarchical clustering of the targets to provide an efficient proposal distribution in \S\ref{sec:sample_cov_sm}. We then relate bias reduction to the granularity of the clustering.
We also present an efficient method for dynamically updating this tree structure in response to encoding parameter gradients. 

\noindent \textbf{Efficient re-embedding}. 
Instead of frequently re-running the updated (typically fairly large and expensive) target encoder model to produce up-to-date embeddings of targets, we propose to approximate (with effective end-task performance) the effect of a gradient update on the target embeddings with an efficient Nystr\"{o}m low-rank approximation (generally using several orders of magnitude less time and memory)  in \S\ref{sec:lowDimApprox}. 

\noindent \textbf{Theoretical analysis}. 
We study the running time of our algorithm under mild assumptions such as Lipschitz encoder functions and expansion rate, showing per-step cost is a function of the number of clusters in our approximation,  considerably smaller than the number of targets itself (Prop.~\ref{proposition:normalizedError}, Prop.~\ref{thm:rejection}). 
We also present a bound on the bias of our softmax estimator, ensuring convergence with stochastic gradient descent (Prop.~\ref{proposition:gradientBias}).

\noindent \textbf{Empirical performance}. 
In \S\ref{sec:exp}, we evaluate our algorithm on two passage retrieval task datasets, {\it Natural Questions (NQ)} \citep{kwiatkowski2019natural} and {\it MSMARCO} \citep{bajaj2016ms}.
On NQ, which has over twenty million targets, we find that our approach cuts error by half in relation to a practically-infeasible oracle brute-force negative mining. Previous state-of-the-art methods can incrementally increase accuracy by increasing memory usage at training time; yet we find that our method still surpasses previous state-of-the-art when using 150x less accelerator memory.
\section{BACKGROUND}
\vspace{-1mm}
Given a data point $\datapoint \in \dataset$ (e.g., a query),
we are tasked with predicting a target $\lbl \in \labelset$ (e.g., a passage answering the question). In our experiments, the number of targets is large, such as tens of millions. 
We assume that the targets $\lbl$ are featurized. We use \emph{encoder} models to represent
both the points and targets. These encoders, which map a point or target's features to a fixed dimensional embedding, are often large parametric pre-trained models (such as transformers \citep{vaswani2017attention,devlin2019bert}).
We denote the encoder model for points as $\dataenc(\datapoint) \in \RR^{\dimensionality}$ and for targets as  $\lblenc(\lbl) \in \RR^{\dimensionality}$. The softmax distribution is:
\begin{equation}
\resizebox{.88\linewidth}{!}{%
$\begin{aligned}
P(\lbl|\datapoint) =\frac{\exp(\invtemp\ip{\dataenc(\datapoint)}{\lblenc(\lbl)})}{Z \triangleq \sum_{\lbl' \in \labelset} \exp(\invtemp\ip{\dataenc(\datapoint)}{\lblenc(\lbl')})},
\end{aligned}$
}\label{eq:softmax}
\end{equation}

where $\invtemp$ denotes the inverse temperature hyperparameter.

We train the parameters of the encoder models $\allparameters = \{\dataparameters,\lblparameters\}$, given labeled training data pairs $(\datapoint_1,\lbl_1),\dots,
(\datapoint_\numtraining,\lbl_\numtraining)$. Our training objective is the cross-entropy loss, which for a given training pair is defined as:
\begin{equation}
    \loss(\datapoint_i,\lbl_i) = - \invtemp\ip{\dataenc(\datapoint_i)}{\lblenc(\lbl_i)} + \log Z. 
\end{equation}
As mentioned by \cite{rawat2019sampled}, most methods use first order optimization, so we consider the gradient wrt $\allparameters$:
\begin{equation}
\begin{aligned}
\nabla_\allparameters \loss(\datapoint_i,\lbl_i) &= - \nabla_\allparameters \invtemp\ip{\dataenc(\datapoint_i)}{\lblenc(\lbl_i)} + \nabla_\allparameters \log Z  \\
\nabla_\allparameters \log Z &= \E_{y \sim P(y|x_i)} \nabla_\allparameters \invtemp\ip{\dataenc(\datapoint_i)}{\lblenc(\lbl)}. \nonumber
\end{aligned}
\end{equation}
Training with cross-entropy loss is computationally challenging because computing the partition function, $Z$, or its gradient requires $|\labelset|$ inner-products. Furthermore, it requires $|\labelset|$ encoding calls to the target encoding model. This latter challenge is unique to training dual encoders and is not a challenge when targets have their own free parameters (e.g., \citet{daume2017logarithmic,sun2019contextual}).

Many works replace the expensive full expectation computation with a Monte Carlo estimate from a small constant number of samples obtained in different ways~\citep{henderson2017efficient,reddi2019stochastic, rawat2019sampled,karpukhin2020dense,lindgren2021efficient}. 
In our work, we design a novel proposal distribution from which it is efficient to sample, and which has bounded gradient bias,
to ensure faster convergence of SGD \citep{ajalloeian2020convergence}. We refer to our approximate loss as $\hat{\loss}$.

Our work will use the same, mild assumptions of many previous works \citep{rawat2019sampled,lindgren2021efficient}.

\begin{restatable}{assum}{lipschitzAssumptions}
\label{assumption:lipchitz} \emph{\textbf{(Lipschitz Encoders)}}
 The dual encoders are $L$-Lipschitz in parameters $\Theta$, that is $\norm{\dataenc(\datapoint) - \dataencprime(\datapoint)} \leq L \norm{\Theta-\Theta'}$ and $\norm{\lblenc(\lbl) - \lblencprime(\lbl)} \leq L\norm{\Theta-\Theta'}$ for all $\datapoint \in \dataset$, $\lbl \in \labelset$.
\end{restatable}
\begin{restatable}{assum}{boundedGradient}
\label{assumption:boundedGradient}
\emph{\textbf{(Bounded Gradients)}} We assume that the logits have bounded gradients, $\norm{\nabla\ip{\dataenc(\datapoint)}{\lblenc(\lbl)}} < M$.
\end{restatable}

\begin{restatable}{assum}{unitNorm}
\label{assumption:unitNorm}
\emph{\textbf{(Unit Norm)}} Dual-encoders produce unit normed vector embeddings\footnote{We note that it is common practice to unit norm the representations from dual encoders, e.g., \citep{gillick2019learning,rawat2020doubly,lindgren2021efficient}}, $\forall \lbl \in \labelset,\ \norm{\lblenc{(\lbl)}} = 1$.
\end{restatable}

\section{EFFICIENT AND ACCURATE SAMPLES FROM THE SOFTMAX DISTRIBUTION}
 \label{sec:sample_cov_sm}
 
In this section, we present our approach for maintaining a dynamic tree-structured clustering that supports efficient and provably accurate sampling from the softmax distribution. In the next section (\S \ref{sec:lowDimApprox}), we will show a novel use of low-rank regression-based approximations to obviate the need for using the encoder to produce updated embeddings and describe the complete training algorithm.  
Proofs for all statements are relegated to the Supplement.
Given the properties of the proposed method, we refer to our approach as \ours, in reference to \textbf{Dy}namic \textbf{N}earest \textbf{N}eighbor \textbf{I}ndex for \textbf{B}ias-reduced \textbf{A}pproximate \textbf{L}oss
(Figure~\ref{fig:main}).

 \newcommand{\rejError}{\epsilon_\textrm{r}}
 \newcommand{\expRejError}{e^{\rejError}}
 \newcommand{\expNegRejError}{e^{-\rejError}}
 \subsection{Accurate Samples from Softmax Distribution}
 \label{sec:cluster_based_proposal}
 We would like to accurately 
 sample from the softmax distribution, without having to compute the computationally intensive
 partition function, $Z$. We consider
 familiar methods: rejection sampling and Metropolis-Hastings. 
 
 To apply rejection sampling, we approximate the unnormalized softmax probability for each target $y$ with an approximation $\hat{y}$ such that:
 \begin{align}
     \expNegRejError \leq \frac{\exp(\invtemp\ip{\dataenc(\datapoint)}{\lblenc(\hat{\lbl})})}{\exp(\invtemp\ip{\dataenc(\datapoint)}{\lblenc(\lbl)})} \leq \expRejError.
 \end{align}
Then, if we sample from: 
\begin{align}
    \lbl \sim \frac{\exp(\invtemp\ip{\dataenc(\datapoint)}{\lblenc(\hat{\lbl})})}{\sum_{\lbl'}\exp(\invtemp\ip{\dataenc(\datapoint)}{\lblenc(\hat{\lbl'})})}
\end{align}
and accept with probability
\begin{align}
  \expNegRejError \frac{\exp(\invtemp\ip{\dataenc(\datapoint)}{\lblenc(\lbl)})}{\exp(\invtemp\ip{\dataenc(\datapoint)}{\lblenc(\hat{\lbl})})},
\end{align}
we will sample from the softmax, akin to past work on rejection sampling for mixture models \citep{zaheer2017canopy}.

Similarly, we can use Metropolis-Hastings to produce a sample from the true softmax distribution $P(\lbl|\datapoint)$ by iteratively sampling (and accepting/rejecting) a state change from a proposal distribution, denoted $\approxsoftmax$.
The approximation error in terms of the total variation of the distribution given by Metropolis-Hastings, $\approxsoftmax_\textsf{MH}$, compared to the true softmax distribution $P$ using a $s$-length chain can be bounded by \citep{mengersen1996rates,cai2000exact,bachem2016fast}:
\begin{equation}
\resizebox{.88\linewidth}{!}{%
$\begin{aligned}
    ||P - \approxsoftmax_\textsf{MH}||_\textsf{TV} \leq \exp \left (-\frac{s-1}{\gamma} \right ) \quad  \gamma = \max_{\lbl \in \labelset} \frac{P(\lbl|\datapoint)}{\approxsoftmax(\lbl|\datapoint)}.
\end{aligned}$}
\end{equation} 
This means that $s>1+\gamma \log \frac{1}{\epsilon}$ gives $||P - \approxsoftmax_\textsf{MH}||_\textsf{TV} \leq \epsilon$.
 
 We achieve high quality samples if the proposal distribution $\approxsoftmax$ is `close' to the true softmax in terms of the ratio $P/\approxsoftmax$. From high quality samples, we will see that bias is minimized to aid convergence of SGD.

 \vspace{-2mm}
 \subsection{Clustering-based Approximations}
 \label{sec:covertree}
 \vspace{-2mm}

The main idea of our approach is to build a clustering of the targets that quantizes the true softmax distribution $P$ by assigning each target to a cluster such that targets in the same cluster have the same probability. Ideally, these clustering-based approximations would be efficient to construct/maintain, and would minimize approximation error.

As an introduction, consider a flat-clustering based approach. We denote the clustering of targets as $\partition$. We denote the cluster assignment of the target $\lbl$ as $\lbl^{(\partition)}$. Each cluster in the clustering $\cluster \in \partition,\ \cluster \subseteq \labelset$ is associated with a representative. We overload notation and use $\cluster$ and $\lbl^{(\partition)}$ to refer to both (1) a set of targets (when used in context of clustering) as well as (2) the features of the cluster's representative (when used as input to a dual encoder model). 
\begin{figure}
     \vspace{-3mm}
     \centering
     \includegraphics[width=0.35\textwidth]{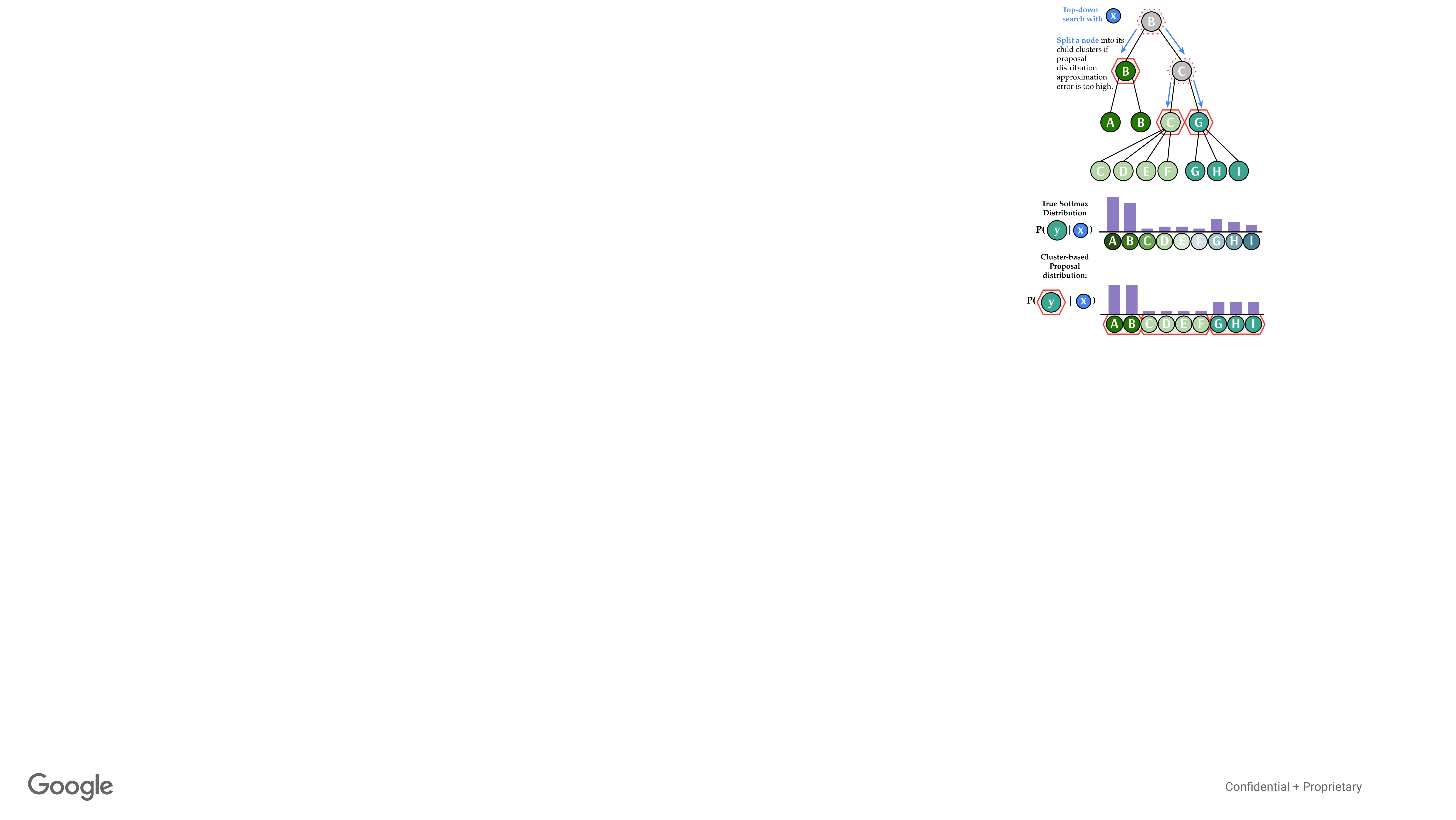}
     \vspace{-2mm}
     \caption{\textbf{Proposed Approach: DyNNIBAL}. Our  approach searches the tree structured index for a clustering of the labels. This clustering is then used to provide an approximate softmax distribution used as a proposal distribution for Metropolis-Hastings. The tree structure supports efficient updates as parameters of the dual encoders change.}
     \label{fig:main}
     \vspace{-3mm}
 \end{figure} 
 
We define the distribution $\approxsoftmax$, by replacing each target with a representative for its cluster. 
\begin{equation}
\resizebox{.88\linewidth}{!}{%
$\begin{aligned}
\approxsoftmax(\lbl|\datapoint;\partition) =\frac{\exp(\invtemp\ip{\dataenc(\datapoint)}{\lblenc(\lbl^{(\partition)})})}{\hat{Z}\triangleq\sum_{\cluster \in \partition} | \cluster| \exp(\invtemp\ip{\dataenc(\datapoint)}{\lblenc(\cluster)})},
\end{aligned}$
}\label{eq:approxclustersoftmax}
\end{equation}

The complexity of sampling from this distribution is a function of the number of clusters, or rather, we only need to measure similarity between the datapoint $\datapoint$ and each of the cluster representatives, e.g., $\bigo(|\partition|)$ inner products to compute the unnormalized probabilities and partition function. 
Next, we investigate how to discover a clustering of targets, which provides efficient sampling while bounding the error of Metropolis-Hastings and increasing the probability of acceptance for rejection sampling.

\vspace{-2mm}
\newcommand{\ctbase}{b}

\subsection{Hierarchical Clustering Structures}
\vspace{-2mm}
\label{sec:search}

Our method for discovering such a clustering is to use hierarchical clustering, in particular SG Trees \citep{zaheer2019sg}, an efficient variant of cover trees \citep{beygelzimer2006cover}. These efficient hierarchical approaches will allow us to bound the approximation quality and discover an adaptive clustering of targets for each point.

\begin{definition}\textbf{\emph{(Hierarchical Clustering)}}
A hierarchical clustering $\hc$ is a set of nested clusterings. For  $\cluster_\mathsf{par}$, `child' clusters are $\children(\cluster_\mathsf{par})$ s.t. $\forall \cluster_\mathsf{kid} \in \children(\cluster_\mathsf{par}), \cluster_\mathsf{kid} \subsetneq \cluster_\mathsf{par}$ and $\nexists \cluster' \in \hc$ s.t. $\cluster_\mathsf{kid} \subset \cluster' \subset \cluster_\mathsf{par}$. 
\end{definition}
\vspace{-1mm}
Cover trees, originally proposed as a nearest neighbor index, are highly scalable to hundreds of millions of targets \citep{zaheer2017canopy} and enjoy theoretical guarantees. 

\begin{definition}\emph{(\textbf{Cover Tree} \citep{beygelzimer2006cover})} A cover tree with base $\ctbase$ is a \emph{level-wise} structure. Levels are clusterings, $\partition_{(\ell)}$. The set of cluster representatives in a level is $Y_{(\ell)}$.
A cover tree maintains the invariants: 
\begin{enumerate}[topsep=0pt,itemsep=0ex,partopsep=1ex,parsep=1ex,leftmargin=5mm]
    \item \textbf{Nesting}. The cluster representatives at a parent level are a subset of the child level, i.e., $Y_{(\ell)} \subseteq Y_{(\ell-1)}$.
    \item \textbf{Covering}. For all $y \in Y_{(\ell-1)}$, there exists a parent node $y_\mathsf{par} \in Y_{(\ell)}$ such that $\norm{\lblenc(y) - \lblenc(y_\mathsf{par})} \leq \ctbase^\ell $
    \item \textbf{Separation}. All distinct nodes in a given $y,y' \in Y_{(\ell)}$, satisfy $\norm{\lblenc(y) - \lblenc(y')} \geq \ctbase^\ell $.
\end{enumerate}
\end{definition}
Closely related are SG Trees  ~\citep{zaheer2019sg}:

\begin{definition}\emph{(\textbf{Stable Greedy (SG) Tree} \citep{zaheer2019sg})} An SG tree is a cover tree with separation defined to only apply to siblings rather than nodes in the same level: 
\begin{enumerate}
[topsep=0pt,itemsep=0ex,partopsep=1ex,parsep=1ex,leftmargin=5mm]
  \setcounter{enumi}{2}
    \item \textbf{Separation}. All distinct siblings, $y,y' \in \children{(y_\mathsf{par})}$, satisfy $\norm{\lblenc(y) - \lblenc(y')} \geq \ctbase^\ell $.
\end{enumerate}
\end{definition}
Because SG trees are considerably more efficient to construct \citep{zaheer2019sg}, they are the focus of our work. However, where possible, we will also describe how cover trees could be used in our methods.

The cover tree and SG tree data structures are not new to this paper. Their use to provide an approximation to the softmax is novel and is the contribution of this section. 

 We make standard assumptions about the representations \citep[inter alia]{beygelzimer2006cover,zaheer2019sg}. 

\begin{restatable}{definition}{expansionConstant}
The expansion constant $\alpha$ for the encoded  targets $Y = \{\lblenc{(\lbl)}: \lbl \in \labelset\}$ is the smallest $\alpha \geq 2$ such that $|B(p, 2r)| \leq \alpha |B(p,r)|$ for all $p \in Y$ and $r \geq 0$ where $B(p,r)$ denotes a ball of radius $r$ around $p$.
\label{def:expansionConstant}
\end{restatable}

First, notice how a particular level of the tree structure can serve as a clustering used in the approximate distribution. Each cluster in the given level has bounded radius over its descendants (cluster members).
By bounding the approximation error of the unnormalized and normalized probabilities for a selected clustering, we can determine the quality of samples using Metropolis-Hastings as well as  the acceptance probability for rejection sampling.

\begin{restatable}{proposition}{unnormalizedError}
\label{proposition:unnormalizedError}
    Under  Assumption~\ref{assumption:unitNorm}, given the clustering at level $\ell$, $\partition_{(\ell)}$, approximating $\lbl$ with the cluster representative, $\lbl^{(\partition_{(\ell)})}$, satisfies the following with $\rejError = \beta \cdot \ctbase^\ell$:
    \vspace{-1mm}
     \begin{align}
     \expNegRejError \leq \frac{\exp(\invtemp\ip{\dataenc(\datapoint)}{\lblenc(\lbl^{(\partition_{(\ell)})})})}{\exp(\invtemp\ip{\dataenc(\datapoint)}{\lblenc(\lbl)})} \leq \expRejError.
 \end{align}
\end{restatable}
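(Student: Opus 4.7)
The plan is to reduce the ratio bound to a norm bound on $\lblenc(\lbl) - \lblenc(\lbl^{(\partition_{(\ell)})})$ and then invoke the covering invariant of the SG/cover tree.

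First I would take logarithms and observe that the proposition is equivalent to
\begin{equation*}
\bigl| \invtemp \ip{\dataenc(\datapoint)}{\lblenc(\lbl^{(\partition_{(\ell)})}) - \lblenc(\lbl)} \bigr| \leq \rejError.
\end{equation*}
Applying Cauchy--Schwarz gives
\begin{equation*}
\bigl| \invtemp \ip{\dataenc(\datapoint)}{\lblenc(\lbl^{(\partition_{(\ell)})}) - \lblenc(\lbl)} \bigr| \leq \invtemp \, \norm{\dataenc(\datapoint)} \cdot \norm{\lblenc(\lbl^{(\partition_{(\ell)})}) - \lblenc(\lbl)},
\end{equation*}
and Assumption~\ref{assumption:unitNorm} immediately eliminates the first factor, $\norm{\dataenc(\datapoint)}=1$. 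So the entire claim reduces to bounding the embedding distance between a target and the representative of the level-$\ell$ cluster it belongs to by $\ctbase^\ell$.

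Next I would use the covering invariant. By definition of $\partition_{(\ell)}$, the target $\lbl$ lies in the descendant set of its level-$\ell$ representative $\lbl^{(\partition_{(\ell)})}$, so there is a path $\lbl = \lbl_0, \lbl_1, \dots, \lbl_k = \lbl^{(\partition_{(\ell)})}$ along which $\lbl_j$ is the parent of $\lbl_{j-1}$ and resides at some level $\ell_j \leq \ell$. The covering property guarantees $\norm{\lblenc(\lbl_{j-1}) - \lblenc(\lbl_j)} \leq \ctbase^{\ell_j}$, and applying the triangle inequality along the path bounds $\norm{\lblenc(\lbl) - \lblenc(\lbl^{(\partition_{(\ell)})})}$ by the geometric sum $\sum_{j\le \ell}\ctbase^{j} = \ctbase^\ell \cdot \ctbase/(\ctbase-1)$, which for an appropriate choice of $\ctbase$ (or up to an absorbed constant folded into $\invtemp$) is $\leq \ctbase^\ell$. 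This yields $\rejError = \invtemp \cdot \ctbase^\ell$ after exponentiation and handling the two-sided inequality symmetrically. Since the covering invariant is shared between cover trees and SG trees, the argument works for both.

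The main obstacle is reconciling the iterated covering bound, which naturally produces a geometric-series factor $\ctbase/(\ctbase-1)$, with the clean $\rejError = \invtemp \cdot \ctbase^\ell$ in the statement. This is handled by treating $\ctbase^\ell$ as the effective cluster radius (the standard convention for cover-tree descendants) and absorbing the geometric constant either into $\invtemp$ or into the definition of level-$\ell$ radius; none of the rest of the argument is delicate, since once the norm bound is in place the rejection-sampling ratio bound follows by monotonicity of $\exp(\cdot)$.
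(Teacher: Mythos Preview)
Your approach is essentially identical to the paper's: rewrite the ratio as an exponential of a difference of inner products, apply Cauchy--Schwarz, use the unit-norm assumption to drop $\norm{\dataenc(\datapoint)}$, and then invoke covering. The only difference is that the paper's proof simply asserts $\norm{\lblenc(\lbl^{(\partition_{(\ell)})}) - \lblenc(\lbl)} \leq \ctbase^\ell$ as a direct consequence of the ``Covering Property'' without unrolling the descendant path, so your concern about the geometric-series factor $\ctbase/(\ctbase-1)$ is a place where you are being more careful than the paper itself (which implicitly adopts the convention that $\ctbase^\ell$ is the level-$\ell$ descendant radius).
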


Similarly, to achieve a given bound on the total variation for Metropolis-Hastings, we would like to control $\gamma$, the ratio of the true softmax to our proposal distribution, $\max_{\lbl \in \labelset} \frac{P(y|x)}{Q(y|x;\partition_{(\ell)})} = \gamma$, in terms of the level $\ell$ selected.

\begin{restatable}{proposition}{normalizedError}
\label{proposition:normalizedError}
Given Assumption~\ref{assumption:unitNorm}, to achieve a maximum ratio of true softmax to proposal distribution equal to $\gamma$ i.e., $\max_{\lbl \in \labelset} \frac{P(y|x)}{Q(y|x;\partition_{(\ell)})} = \gamma$, we need the clustering at level $\ell$, where:
$
    \ell \triangleq \max \{\ell \in \mathbb{Z} \ : \ \ctbase^{\ell}  \leq \frac{1}{2\beta} \log \gamma  \}.  
$
\end{restatable}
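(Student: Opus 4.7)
The plan is to decompose the ratio $P(y\mid x)/Q(y\mid x;\partition_{(\ell)})$ into an unnormalized-logit piece and a partition-function piece, and bound each separately using Proposition~\ref{proposition:unnormalizedError}.

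First I would write, for any target $y$,
\begin{equation*}
\frac{P(y\mid x)}{Q(y\mid x;\partition_{(\ell)})}
= \underbrace{\frac{\exp(\invtemp\ip{\dataencPT}{\lblencPT})}{\exp(\invtemp\ip{\dataencPT}{\lblenc(y^{(\partition_{(\ell)})})})}}_{\text{unnormalized piece}} \cdot \underbrace{\frac{\hat Z}{Z}}_{\text{partition piece}}.
\end{equation*}
By Proposition~\ref{proposition:unnormalizedError}, the unnormalized piece lies in $[\expNegRejError,\expRejError]$ with $\rejError = \invtemp\cdot\ctbase^\ell$, so it is at most $\expRejError$.

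Next I would handle the partition ratio by rewriting $\hat Z = \sum_{\cluster\in\partition_{(\ell)}}|\cluster|\exp(\invtemp\ip{\dataencPT}{\lblenc(\cluster)}) = \sum_{y'\in\labelset}\exp(\invtemp\ip{\dataencPT}{\lblenc(y'^{(\partition_{(\ell)})})})$, i.e., the approximate partition is the sum of the \emph{approximated} unnormalized probabilities over all targets. Applying the per-term bound from Proposition~\ref{proposition:unnormalizedError} to every summand, and summing, yields $\expNegRejError\,Z \leq \hat Z \leq \expRejError\,Z$, hence $\hat Z/Z \leq \expRejError$. Multiplying the two bounds gives
\begin{equation*}
\max_{\lbl\in\labelset}\frac{P(\lbl\mid\datapoint)}{\approxsoftmax(\lbl\mid\datapoint;\partition_{(\ell)})} \leq e^{2\rejError} = e^{2\invtemp\,\ctbase^\ell}.
\end{equation*}

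Finally, to force this to be at most $\gamma$, I set $e^{2\invtemp\,\ctbase^\ell}\leq \gamma$, which rearranges to $\ctbase^\ell \leq \tfrac{1}{2\invtemp}\log\gamma$, and take the largest integer $\ell$ satisfying this, matching the statement. The only subtle step is the termwise application to $\hat Z$: one must keep track that the multiplicity $|\cluster|$ is exactly what turns the per-cluster sum into a per-target sum, so that the $e^{\pm\rejError}$ bound from Proposition~\ref{proposition:unnormalizedError} transfers cleanly to the aggregated partition function. Everything else is a direct rearrangement, so I do not anticipate further obstacles.
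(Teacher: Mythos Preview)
Your proposal is correct and follows essentially the same approach as the paper: decompose $P/Q$ into an unnormalized-logit ratio and the partition ratio $\hat Z/Z$, bound each by $e^{\beta b^\ell}$, and combine. The only cosmetic difference is that the paper bounds $\hat Z/Z$ via the mediant inequality $\frac{\sum a_i}{\sum b_i}\le\max_i a_i/b_i$ and then re-derives the Cauchy--Schwarz/covering step, whereas you rewrite $\hat Z$ as a per-target sum and invoke Proposition~\ref{proposition:unnormalizedError} termwise; these are equivalent.
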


\begin{restatable}{remark}{changeFlat}
Observe the relationship between $\gamma$ and $\ell$. Descending one more level of the tree to level $\ell-1$, reduces the ratio from $\gamma$ (selecting level $\ell$) to $\gamma^{\frac{1}{b}}$.
\end{restatable}

\vspace{-1mm}
The aforementioned results describe how to achieve a given approximation error by selecting a level of the tree structure. 
Now, let's consider methods which adaptively
use the hierarchical structure to produce 
a sample for a given datapoint. These approaches 
will start with a coarse clustering (e.g., some level $\ell$ satisfying a minimal requirement on the aforementioned sources of error).

First, let's consider a theoretically motivated rejection sampling approach based on the rejection sampling methods for mixture models proposed by \cite{zaheer2017canopy}. The approach works by
iteratively selecting a finer-grained set of cluster representatives to sample from while still being a valid rejection sampler for the softmax distribution. The sampling is modified such that if a given cluster is accepted, we return its representative. We start with the clusters at level $\ell$. We perform one step of rejection sampling. If we accept, we return the cluster representative itself. Otherwise, we descend to that 
cluster's children in level $\ell-1$ and repeat the following procedure until the leaves of the tree. We sample among the children of the node and one specially defined \emph{restart} option, $\uptodownarrow$. If the \emph{restart} option is sampled, we begin the algorithm again at level $\ell$. If we sample a child other the nested self-child and we accept the child's cluster, we return its representative with a given probability. If we sample the nested-child or if we do not accept the sampled child, we descend the tree and consider the children of the sampled node. To be a valid rejection sampler, we maintain running normalizers as shown in Algorithm~\ref{alg:rejection_sample_alg}.

\begin{restatable}{proposition}{rejectionSamplingThm}
Algorithm~\ref{alg:rejection_sample_alg} produces samples from the softmax $P(\lbl|\datapoint)$ in time $\bigo(|\partition_{\ell}| + \alpha^4e^{\beta \ctbase^{\ell+2}})$ for cover trees and $\bigo(|\partition_{\ell}| + \alpha^3 e^{\beta \ctbase^{\ell+2}})$ for SG trees. 
\label{thm:rejection}
\end{restatable}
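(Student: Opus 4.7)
The plan is to split the argument into a correctness part (the algorithm is an exact sampler for $P(\lbl \mid \datapoint)$) and a running-time part.

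For correctness, I would induct on the depth of descent. At every internal node the running normalizers in Algorithm~\ref{alg:rejection_sample_alg} make the local step a valid rejection-sampling move whose acceptance probability is exactly the ratio of the true unnormalized softmax to the cluster-based proposal, which by Proposition~\ref{proposition:unnormalizedError} is upper bounded by $\expRejError$ with $\rejError = \beta \ctbase^\ell$. The nested-self child and the restart option $\uptodownarrow$ together account for the residual probability mass needed for normalization; when $\uptodownarrow$ fires, the chain resets to level $\ell$ without introducing bias. The induction hypothesis is that, conditional on reaching a given subtree, the eventual returned representative has distribution equal to $P$ restricted and renormalized to that subtree. Propagating the induction to the root yields a sample from $P$, matching the usual correctness argument for hierarchical rejection samplers such as \citet{zaheer2017canopy}.

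For running time, I would decompose the cost into three pieces. First, forming the proposal at level $\ell$ requires $\bigo(|\partition_\ell|)$ inner products to compute the unnormalized weights and the normalizer $\hat Z$. Second, at each subsequent step we only examine the children of a single node; the branching factor is bounded by $\alpha^4$ for cover trees and $\alpha^3$ for SG trees, a standard consequence of the covering and separation invariants together with Definition~\ref{def:expansionConstant} (the stronger separation-among-siblings in SG trees removes one factor of $\alpha$). Third, the expected number of rejection trials, including those absorbed through restarts, is controlled by the worst-case ratio $\expRejError$ from Proposition~\ref{proposition:unnormalizedError}; bounding $\rejError$ by the maximum distance from a level-$\ell$ representative to any descendant, namely $\sum_{i \geq 0} \ctbase^{\ell-i} \leq \ctbase^{\ell+1}/(\ctbase-1)$, and absorbing the $O(1)$ multiplicative constants into the exponent yields the stated factor $e^{\beta \ctbase^{\ell+2}}$. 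Multiplying the branching and trial bounds gives the second additive term; adding the initial proposal cost gives the final expression.

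The principal obstacle is the joint handling of restarts: one must simultaneously verify that the restart mechanism preserves the target distribution, most cleanly by viewing the entire descent-with-restart as a single acceptance–rejection sampler against a mixture-style proposal, and that the expected number of restarts does not inflate the asymptotic running time, using the fact that the rejection probability at any visited node is bounded away from $1$ by the same exponential factor. A secondary subtlety is ensuring that the chosen starting level $\ell$ is deep enough in the tree that the top-level proposal is nontrivial but shallow enough that $|\partition_\ell|$ remains the dominant first term, which is precisely the trade-off that the statement exposes through its two additive summands.
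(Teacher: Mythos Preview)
Your correctness plan has a structural gap. The induction hypothesis you state---``conditional on reaching a given subtree, the eventual returned representative has distribution equal to $P$ restricted and renormalized to that subtree''---fails precisely because of the restart option: once $\uptodownarrow$ fires inside a subtree, the algorithm returns to level $\ell$ and may output a target outside that subtree, so the conditional law given ``reaching the subtree'' is not the restricted softmax. You notice this tension yourself in the last paragraph, but you do not repair the induction. The paper does not attempt an inductive argument at all. It computes directly the probability $\mathcal{A}(\lbl)$ of accepting a fixed target $\lbl$ in a single descent (no restart) by writing the product of the level-$\ell$ sampling probability, the rejection-then-descend probabilities along the unique root-to-$\lbl$ path, and the final acceptance probability; the running normalizers $Z_k$ and $\delta_{k,\cluster_k}$ make this product telescope to $\mathcal{A}(\lbl)=\tfrac{1}{Z_\ell}\tfrac{1}{|\labelset|}\exp(\beta\ip{\dataenc(\datapoint)}{\lblenc(\lbl)})$. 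The restart probability is then $\mathcal{R}=1-\sum_{\lbl'}\mathcal{A}(\lbl')$, and the fixed-point equation $\textsf{Pr}(\lbl)=\mathcal{A}(\lbl)+\mathcal{R}\cdot\textsf{Pr}(\lbl)$ gives $\textsf{Pr}(\lbl)=\tfrac{1}{Z}\exp(\beta\ip{\dataenc(\datapoint)}{\lblenc(\lbl)})$. This is exactly the ``view the whole thing as a single accept/reject sampler'' that you allude to, but the content is the telescoping computation, which your proposal does not supply.

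Your running-time sketch is closer but still misses the key step. You bound the expected number of \emph{trials} by $e^{\beta\ctbase^{\ell+2}}$ and then multiply by the branching factor, but a single trial is a full descent touching many levels, so ``trials $\times$ branching'' is not the cost. The paper instead bounds the expected number of rejections \emph{at each level} separately---$e^{\beta\ctbase^{\ell+2}}-1$ at the top, $e^{\beta\ctbase^{\ell+1}}-1$ one level down, and so on---and sums $\textsf{BF}\sum_{k\geq 1}(e^{\beta\ctbase^{\ell-k}}-1)$, which converges because $e^x-1\leq xe^a$ on $[0,a]$ and $\sum_k\ctbase^{-k}$ is geometric. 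That level-wise geometric sum is what turns ``branching times trials times depth'' into the single additive $\textsf{BF}\cdot e^{\beta\ctbase^{\ell+2}}$ term; your argument as written does not establish this.
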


\vspace{-2mm}
\begin{wrapfigure}{r}{0.51\linewidth}
\vspace{-7mm}
\hspace*{-7mm}
    \includegraphics[width=0.3\textwidth]{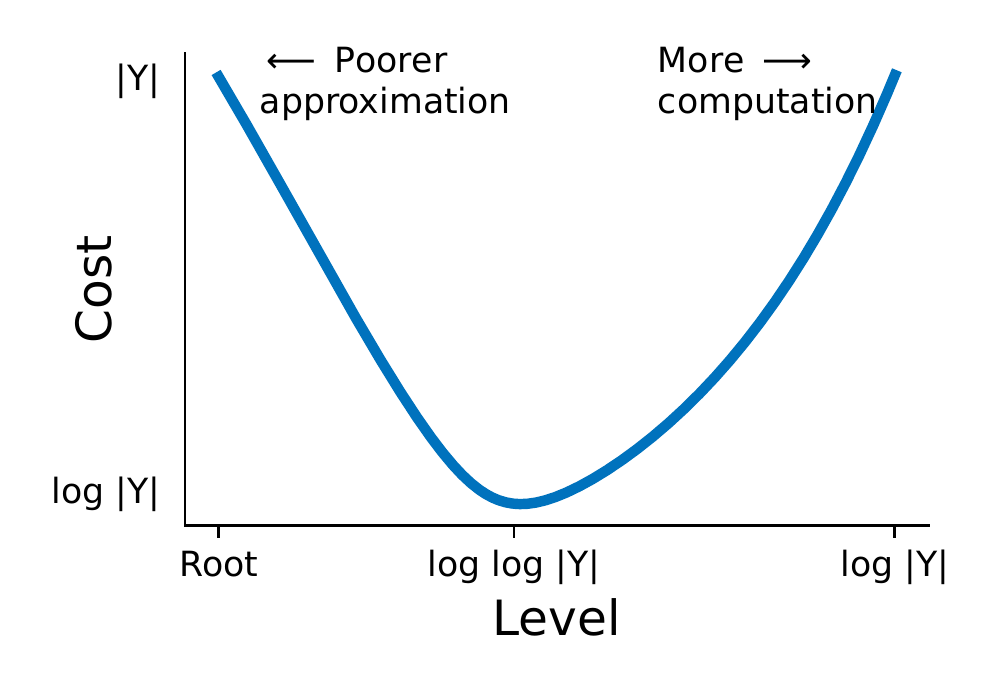}
    \vspace{-9mm}
\end{wrapfigure}
In Proposition~\ref{thm:rejection}, the first term of the cost corresponds to initial computation and second term to quality of the proposal.
If we pick $\ell$ to be too close to the root, then initial computation is cheaper as cluster size $|\partition_{\ell}|$ will be small, but quality of the proposal will be worse leading to many rejections. As we descend down the down the tree for picking $\ell$ the quality of proposal will keep improving but so will initial computation cost as $|\partition_{\ell}|$ grows. There is a good trade-off point in between as illustrated in the figure.

\setlength{\textfloatsep}{5pt}
\newcommand{\numdesc}{\blacktriangle}
\begin{algorithm}[t]
\caption{\textsc{RejectionSampling}}
\begin{algorithmic}[1]
\STATE{\textbf{Input:} $\hc$: tree, $\datapoint$: point, $\ell$: initial level.} 
\STATE{\textbf{Output:} A sample from $P(\lbl|\datapoint)$}
\STATE Define $\numdesc_\cluster \gets \frac{|\cluster|}{ |\labelset|}$ for all tree nodes $\cluster \in \hc$.
\STATE $Z_\ell \gets e^{\ctbase^\ell} \sum_{\cluster \in \partition_i} \numdesc_\cluster \exp(\invtemp\ip{\dataenc(\datapoint)}{\lblenc(\cluster)})$
\STATE Define $\delta_{\ell,\cluster} \gets {Z_\ell}^{-1} e^{\ctbase^\ell}  \numdesc_\cluster \exp(\invtemp\ip{\dataenc(\datapoint)}{\lblenc(\cluster)})  $
\STATE Sample $\cluster_\ell$ from $\partition_\ell$ proportional to  $\delta_{\ell,\cluster_\ell}$
\STATE Accept \& \textbf{return} the representative of $\cluster_\ell$ with prob:\vspace{-2mm} \[\pi_\ell \gets {Z_\ell}^{-1} {\delta_{i,\cluster_i}}^{-1} {|\labelset|}^{-1} \exp(\invtemp\ip{\dataenc(\datapoint)}{\lblenc(\cluster_i)})\] \vspace{-6mm}
\FOR{$k$ \textbf{from} $\ell-1$ \textbf{down to} $-\infty$}
\STATE $Z_k \gets \delta_{k+1}(1-\pi_{k+1})$
\STATE $\delta_{k,\cluster_k} \gets Z_k^{-1} e^{\ctbase^k} \numdesc_{\cluster_k} \exp(\invtemp\ip{\dataenc(\datapoint)}{\lblenc(\cluster_k)})$
\STATE $\delta_{k,\uptodownarrow} \gets 1-\sum_{\cluster \in \children{(\cluster_{k+1}})} \delta_{k,\cluster}$ 
\STATE Sample $\cluster_k$ from $\children{(\cluster_{k+1})}  \bigcup \{\uptodownarrow\}$ prop. to $\delta_{k,\cluster_k}$.
\IF{$\cluster_k =\ \uptodownarrow$}
\STATE \textsc{RejectionSampling}($\hc,\datapoint,\ell$)
\ELSIF{the representative of $\cluster_k$ \& $\cluster_{k+1}$ differ}
\STATE Accept \& \textbf{return} the rep. of $\cluster_k$ with prob $\pi_{k}$:
\vspace{-2mm}
\[
\pi_{k} \gets Z_k^{-1}{\delta_{k,\cluster_k}}^{-1}{|\labelset|}^{-1}\exp(\invtemp\ip{\dataenc(\datapoint)}{\lblenc(\cluster_k)})
\]
\vspace{-4mm}
\ENDIF
\ENDFOR
\end{algorithmic}
\label{alg:rejection_sample_alg}
\end{algorithm} 

Next, we consider a more practical and simple adaptive extension for Metropolis-Hastings. Intuitively, we want to reduce approximation error for high probability targets. Our approach splits a cluster if the radius is at least $\ctbase^m$ and if it \emph{may} contain a target $\lbl$, such that  $\norm{{\dataenc(\datapoint)}-{\lblenc(\lbl)}} < \ctbase^m$. This follows the intuition that we would like to emphasize the closest targets to a point.
In other words, we descend the tree from level $\ell$ to level $m$ splitting clusters which might contain a target that is within $\ctbase^{m}$ of $\dataenc{(\datapoint)}$ (Algorithm~\ref{alg:covertreesearchforpartition}).

\begin{restatable}{proposition}{searchCorrectness}
\label{prop:searchCorrectness} 
Let $\partition$ be the output of Algorithm~\ref{alg:covertreesearchforpartition}, then $\max_{\lbl \in \labelset} \frac{P(\lbl|\datapoint)}{Q(\lbl|\datapoint,\partition)} \leq \gamma$ under Assumption~\ref{assumption:unitNorm}.
\end{restatable}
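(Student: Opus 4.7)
The plan is a pointwise ratio bound that reduces to the fixed-level analysis already established in Proposition~\ref{proposition:normalizedError}. The structural fact I would rely on is that Algorithm~\ref{alg:covertreesearchforpartition} begins from the level-$\ell$ clustering and only ever descends/splits, so every $\cluster\in\partition$ sits at some SG-tree level $k_\cluster \le \ell$ and therefore has covering radius at most $\ctbase^{\ell}$. Fix $\lbl\in\labelset$ with cluster $\cluster$, and decompose
\begin{equation*}
\frac{P(\lbl|\datapoint)}{\approxsoftmax(\lbl|\datapoint,\partition)}
=\frac{\hat Z}{Z}\cdot \frac{\exp(\invtemp\ip{\dataenc(\datapoint)}{\lblenc(\lbl)})}{\exp(\invtemp\ip{\dataenc(\datapoint)}{\lblenc(\cluster)})}.
\end{equation*}

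I would bound the second factor using Cauchy--Schwarz and Assumption~\ref{assumption:unitNorm} together with the SG-tree covering property $\norm{\lblenc(\lbl)-\lblenc(\cluster)}\le \ctbase^{k_\cluster}\le \ctbase^\ell$; this yields $\le \exp(\invtemp \ctbase^\ell)$, which is exactly Proposition~\ref{proposition:unnormalizedError} applied at the level of $\cluster$. For the first factor I would apply the same unnormalized bound cluster by cluster,
\begin{equation*}
|\cluster'|\exp(\invtemp\ip{\dataenc(\datapoint)}{\lblenc(\cluster')})\le \exp(\invtemp \ctbase^{k_{\cluster'}})\sum_{\lbl'\in \cluster'}\exp(\invtemp\ip{\dataenc(\datapoint)}{\lblenc(\lbl')}),
\end{equation*}
and sum over $\cluster'\in\partition$; since $\partition$ partitions $\labelset$, the right side telescopes to $\exp(\invtemp \ctbase^\ell)\, Z$, giving $\hat Z/Z \le \exp(\invtemp \ctbase^\ell)$.

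Combining the two factors gives $P(\lbl|\datapoint)/\approxsoftmax(\lbl|\datapoint,\partition)\le \exp(2\invtemp \ctbase^\ell)$ uniformly in $\lbl$. Since the starting level is chosen exactly as in Proposition~\ref{proposition:normalizedError}, namely so that $\ctbase^\ell \le \frac{1}{2\invtemp}\log\gamma$, the ratio is at most $\gamma$, which is the desired claim. The main point requiring care---and the only step not already contained in the proof of Proposition~\ref{proposition:normalizedError}---is the structural invariant that Algorithm~\ref{alg:covertreesearchforpartition} never returns a cluster coarser than its starting level; once this invariant is established by induction on the descent, the rest follows, and the adaptive splitting around $\datapoint$ only tightens the approximation for clusters the algorithm chooses to refine, never weakening the worst-case ratio.
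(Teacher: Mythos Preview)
Your proposal is correct and follows essentially the same route as the paper: both observe that Algorithm~\ref{alg:covertreesearchforpartition} only descends from the initializing level $\ell$, so every output cluster lies at some level $k\le\ell$ with covering radius at most $\ctbase^{\ell}$, and then the argument of Proposition~\ref{proposition:normalizedError} applies verbatim to give the $\exp(2\invtemp\ctbase^{\ell})\le\gamma$ bound. Your handling of the $\hat Z/Z$ factor via the cluster-by-cluster sum is in fact more explicit than the paper's own proof, which simply records the per-cluster inequality $\max_{\lbl\in\cluster}P/Q\le\exp(\invtemp\ctbase^{k})\le\exp(\invtemp\ctbase^{\ell})$ and points back to Proposition~\ref{proposition:normalizedError}.
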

\begin{restatable}{remark}{searchTime}
While in the worst case, we select all clusters in level $m$, e.g., $\bigo(|\partition_{(m)}|)$ clusters, in practice we expect this to be much less and can limit to a specified number by either limiting the size of the frontier or output partition.
\end{restatable}
\vspace{-2mm}

To put our results in perspective, 
consider a simple alternative, uniform negative sampling. 
Let $Q_\textsf{unif}$ be a uniform proposal distribution, for which $\max_{y \in \labelset} \frac{P(y|x)}{Q_\textsf{unif}(y|x)} \leq |\labelset|$. 

\begin{remark}\textbf{\emph{(Uniform Negatives vs DyNNIBAL)}}
Consider the case where we have fixed compute budget for the chain length. 
For a uniform distribution, the total variation is bounded by $\exp\left(-\bigo\left(\frac{s}{|\labelset|}\right)\right)$.
Since our each sample is slightly more expensive, we can only afford to have a chain of length $\frac{s}{|\partition_{(\ell)}|}$ for any selected clustering $\partition_{(\ell)}$.
But even this reduced length chain will yield a much better total variation bound by Proposition~\ref{proposition:unnormalizedError}.
In particular, if we pick a level just $\log\log|\labelset|$ below the root, which is not very deep, then we obtain the total variation bound as $\exp\left(-\bigo\left(\frac{s}{\log|\labelset|}\right)\right)$.
Notice that this is marked improvement because of the logarithmic term in the denominator. 
\end{remark}

\begin{algorithm}[t]
\caption{\sc FindClustering}
\begin{algorithmic}[1]
\STATE{\textbf{Input:} $\hc$: tree, $x$: point, $\gamma, m$: allowed  error} 
\STATE{\textbf{Output:} A clustering $\partition$ }
\STATE $\ell \gets \max \{\ell \in \mathbb{Z} \ : \ \ctbase^{\ell}  \leq \frac{1}{2\beta} \log \gamma   \}$
\STATE $\mathscr{F}_{\ell} \gets \partition_{\ell}$ \mycomment{Initialize frontier to be the $\ell^{th}$ level of $\hc$.}
\STATE $\partition \gets \{\}$  \mycomment{The output clustering}
\FOR{$k$ \textbf{from} $\ell$ \textbf{down to} $m$}
\STATE $\mathscr{F} \gets \{\children{(F)}: F \in \mathscr{F}_k\}$
\STATE $\mathscr{F}_{k-1} \gets \{\}$
\FOR{$F$ in $\mathscr{F}$}
\IF{$\norm{\dataenc(\datapoint) - \lblenc(F)} > \ctbase^k + \ctbase^{m}$}
\STATE $\partition \gets \partition \cup \{F\}$
\ELSE 
\STATE $\mathscr{F}_{k-1} \gets \{F\}$
\ENDIF
\ENDFOR
\ENDFOR
\STATE \textbf{return} $\partition \cup \mathscr{F}_{m}$
\end{algorithmic}
\label{alg:covertreesearchforpartition}
\end{algorithm} 

\subsection{Gradient Bias of Our Estimator}
\label{sec:covertree_estimaor}
\vspace{-2mm}
To ensure convergence in gradient descent, we need our estimator to have bounded gradient bias \citep{ajalloeian2020convergence}. We are interested in the bias of the gradient estimate: $\norm{\E [\nabla_\Theta \hat{\loss}] - \nabla_\Theta \loss}$, where the expectation is over the Metropolis-Hasting samples. 

\begin{restatable}{proposition}{gradientBias}
\label{proposition:gradientBias} Let $P$ be the true softmax and $Q_\mathsf{MH}$ be the  Metropolis-Hastings approximation to the softmax. Under Assumption~\ref{assumption:boundedGradient}, we have 
$\norm{\E [\nabla_\Theta \hat{\loss}] -  \nabla_\Theta \loss} \leq 2\epsilon \beta M$, where $\norm{P-Q_\mathsf{MH}}_\mathsf{TV} \leq \epsilon$.
\end{restatable}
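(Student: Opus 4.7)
The plan is to expand both gradients explicitly, observe that they differ only in the distribution used to compute the expectation of the partition-function gradient, and then bound the resulting difference of expectations by the total variation distance using a standard coupling-style inequality.

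First, I would recall that $\nabla_\Theta \loss = -\nabla_\Theta \beta \ip{\dataencPT}{\lblencPT} + \E_{\lbl\sim P(\cdot|\datapoint)} \nabla_\Theta \beta\ip{\dataencPT}{\lblenc(\lbl)}$, and that by construction of the approximate loss $\hat\loss$, only the expectation term is replaced, giving $\E[\nabla_\Theta \hat\loss] = -\nabla_\Theta \beta \ip{\dataencPT}{\lblencPT} + \E_{\lbl\sim Q_\mathsf{MH}(\cdot|\datapoint)} \nabla_\Theta \beta \ip{\dataencPT}{\lblenc(\lbl)}$ (the outer expectation pulls through since the Metropolis-Hastings chain marginalizes to $Q_\mathsf{MH}$). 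Subtracting, the deterministic query-target term cancels and we are left with
\begin{equation*}
\E[\nabla_\Theta \hat\loss] - \nabla_\Theta \loss = \sum_{\lbl \in \labelset} \bigl(Q_\mathsf{MH}(\lbl|\datapoint) - P(\lbl|\datapoint)\bigr)\, \nabla_\Theta \beta \ip{\dataencPT}{\lblenc(\lbl)}.
\end{equation*}

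Next, applying the triangle inequality for norms and pulling the absolute value of the probability gap outside, I would write
\begin{equation*}
\bigl\|\E[\nabla_\Theta \hat\loss] - \nabla_\Theta \loss\bigr\| \leq \sum_{\lbl \in \labelset} \bigl|Q_\mathsf{MH}(\lbl|\datapoint) - P(\lbl|\datapoint)\bigr| \cdot \bigl\|\nabla_\Theta \beta \ip{\dataencPT}{\lblenc(\lbl)}\bigr\|.
\end{equation*}
The bounded-gradient Assumption~\ref{assumption:boundedGradient} gives $\|\nabla\ip{\dataencPT}{\lblenc(\lbl)}\| < M$ uniformly in $\lbl$, so the per-term norm is at most $\beta M$. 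This factors out of the sum, leaving the $\ell_1$ distance between $P$ and $Q_\mathsf{MH}$, which equals $2\|P-Q_\mathsf{MH}\|_\mathsf{TV}$ by the standard identity. Combining with the hypothesis $\|P-Q_\mathsf{MH}\|_\mathsf{TV}\leq\epsilon$ yields the claimed bound $2\epsilon\beta M$.

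There is no real obstacle here beyond bookkeeping; the only subtle point is justifying that the expectation over the Metropolis-Hastings randomness indeed reduces to an expectation under the stationary law $Q_\mathsf{MH}$ of the returned sample, which is immediate from how $\hat\loss$ was defined in Section~\ref{sec:sample_cov_sm}. One could alternatively state the bound in terms of $\ell_1$ distance directly and then convert, but converting via $\|\cdot\|_1 = 2\|\cdot\|_\mathsf{TV}$ at the end is cleaner and matches the form used elsewhere in the paper.
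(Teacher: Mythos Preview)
Your proposal is correct and follows essentially the same argument as the paper's proof: expand the two gradients, cancel the common positive-pair term, bound each summand by $\beta M$ via Assumption~\ref{assumption:boundedGradient}, and convert the remaining $\ell_1$ sum to $2\|P-Q_\mathsf{MH}\|_\mathsf{TV}\leq 2\epsilon$. The paper's proof is line-for-line the same, without the extra remark on the MH expectation.
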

\vspace{-2mm}
\subsection{Dynamic Maintenance of the Tree Structure}
\label{sec:dynamic}
\vspace{-2mm}

During training, the parameters of the dual encoder models, $\dataenc$, $\lblenc$ are updated. As a result the tree structure properties may no longer be upheld. In this section, we analyze how representations could change under standard assumptions about the data. We then describe an algorithm for maintaining an SG tree and a simple approximation in practice. 

Finding the part of the SG tree that no longer maintains its invariants after a parameter update depends on how the distance between a pair of targets changes after $w$ steps of gradient descent. Let the learning rate be $\eta$. The dual encoder parameters are updated at step $t$ as $\allparameters_t \gets \allparameters_{t-1}  - \eta \nabla_\allparameters \hat{\loss}(\datapoint_t,\lbl_t)$. We can bound the pairwise change:

\begin{restatable}{proposition}{pairwiseAmountOfChange}
\label{proposition:pairwiseAmountOfChange} Under Assumptions~\ref{assumption:lipchitz},\ref{assumption:boundedGradient},\ref{assumption:unitNorm},
let $\phi_t$ and $\phi_{t+w}$ refer to encoder parameters after $w$ more steps of gradient descent with learning rate $\eta$.
\vspace{-1mm}
\begin{equation}
\resizebox{.9\linewidth}{!}{%
$\begin{aligned}
   \left | \norm{\lblencPTstep{t} - f_{\phi_{t}}(\lbl')}_2 - \norm{\lblencPTstep{t+w} - f_{\phi_{t+w}}(\lbl')}_2 \right | \leq 4w\eta \beta LM.
    \label{eq:changedAmount}
\end{aligned}$}
\end{equation}
\end{restatable}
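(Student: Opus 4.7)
The bound is a chain of triangle-inequality/Lipschitz reductions, so the plan is to peel off the operations one at a time and absorb the constants at the end. First I would apply the reverse triangle inequality to eliminate the outer $|\cdot|$ and the norms, yielding
\[
\bigl|\,\|f_{\phi_t}(y)-f_{\phi_t}(y')\| - \|f_{\phi_{t+w}}(y)-f_{\phi_{t+w}}(y')\|\bigr| \le \bigl\|\bigl(f_{\phi_t}(y)-f_{\phi_{t+w}}(y)\bigr) - \bigl(f_{\phi_t}(y')-f_{\phi_{t+w}}(y')\bigr)\bigr\|.
\]
A second triangle inequality then splits the right-hand side into a sum of two terms, each of the form $\|f_{\phi_t}(z)-f_{\phi_{t+w}}(z)\|$ for $z\in\{y,y'\}$.

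Next I would invoke the Lipschitz Assumption~\ref{assumption:lipchitz} on each term to get $\|f_{\phi_t}(z)-f_{\phi_{t+w}}(z)\|\le L\|\Theta_t-\Theta_{t+w}\|$, reducing the problem to bounding the cumulative parameter drift over the $w$ SGD steps. A telescoping triangle inequality gives $\|\Theta_t-\Theta_{t+w}\|\le \sum_{s=0}^{w-1}\|\Theta_{t+s+1}-\Theta_{t+s}\| = \eta\sum_{s=0}^{w-1}\|\nabla_\Theta\hat{\loss}(x_{t+s},y_{t+s})\|$, so it remains to bound a single gradient step of $\hat{\loss}$.

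The only step that requires some care is bounding $\|\nabla_\Theta\hat{\loss}\|$. Writing the gradient as $-\beta\nabla_\Theta\ip{\dataenc(x_i)}{\lblenc(y_i)} + \beta\,\mathbb{E}_{y\sim \approxsoftmax}\nabla_\Theta\ip{\dataenc(x_i)}{\lblenc(y)}$ (i.e.\ the approximate softmax replaces the true one, which does not affect the form of the gradient), the triangle inequality together with Jensen's inequality for the expectation and Assumption~\ref{assumption:boundedGradient} give $\|\nabla_\Theta\hat{\loss}\|\le 2\beta M$. Substituting back, $\|\Theta_t-\Theta_{t+w}\|\le 2w\eta\beta M$, and multiplying by $L$ and by the factor of $2$ from the two-term split yields the desired $4w\eta\beta L M$ bound.

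The main obstacle, if any, is the bookkeeping for the gradient-norm bound on $\hat{\loss}$: one must ensure that whatever proposal/acceptance mechanism produced the samples, the resulting stochastic gradient is still a convex combination of logit gradients, so Assumption~\ref{assumption:boundedGradient} remains applicable. Everything else is routine application of triangle inequalities and Assumption~\ref{assumption:lipchitz}.
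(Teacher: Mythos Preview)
Your proposal is correct and follows essentially the same approach as the paper: bound $\|\nabla_\Theta\hat{\loss}\|\le 2\beta M$ via Assumption~\ref{assumption:boundedGradient}, telescope to get $\|\Theta_t-\Theta_{t+w}\|\le 2w\eta\beta M$, apply Assumption~\ref{assumption:lipchitz}, and finish with triangle inequalities. Your use of the reverse triangle inequality in the first step is in fact a cleaner version of the paper's two-step triangle-inequality manipulation, but the argument is otherwise identical.
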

\vspace{-1mm}
We can detect whether, for a given pair of tree nodes with representatives $y$ and $y'$, if the covering property with respect to level $\ell$ will be maintained after the gradient update:
\begin{align}
\norm{\lblencPTstep{t} - f_{\phi_{t}}(\lbl')}_2 +  4w\eta \beta LM \leq \ctbase^{\ell},
\end{align}
and similarly for separation:
\begin{align}
\norm{\lblencPTstep{t} - f_{\phi_{t}}(\lbl')}_2 -  4w\eta \beta LM \geq \ctbase^{\ell-1}.
\end{align}

This leads to a simple algorithm for detecting which parts of the tree structure need to be re-arranged. We can maintain for each node, the distance to its farthest descendant, denoted $\textsc{maxd}(\cluster)$, to detect if covering is maintained. Similarly, we can maintain for each node, the distance between its closest pair of children nodes (SG tree), denoted $\textsc{mind}(\cluster)$, to detect if separation is maintained. Notice however that even if an ancestor node maintains the properties, a descendant may still violate them.  

An algorithm for updating the SG tree is: delete and rebuild the smallest subtrees such that levels above the subtree root maintain covering and separation (checking $\textsc{maxd}$ and $\textsc{mind}$)). We notice that for SG Trees since the separation property is only maintained between siblings and not all nodes in a given level, when we rebuild the structure we can re-attach the rebuilt subtree with the same root. This is described in Algorithm~\ref{alg:update_sg}. 
We observe that the selected bound $4w\eta \beta LM$ is equivalent to picking a level at which we rebuild, which can be easier in practice.
Rebuilding subtrees can be empirically very efficient. The rebuilding can be done independently in parallel. Each subtree contains relatively far fewer targets than the tree as a whole.

\begin{algorithm}[t]
\caption{\sc UpdateSGTree}
\begin{algorithmic}[1]
\STATE{\textbf{Input:} $\cluster_\ell$: a subtree root at level $\ell$, $\cluster_r$: an ancestor node covering  its descendants or $\emptyset$ to indicate we are rebuilding at the top level, $4w\eta \beta LM$: bound on change}
\IF {$\ctbase^\ell \leq 4w\eta \beta LM$}
\STATE Rebuild the subtree $\cluster_\ell$ under $\cluster_r$.
\ELSE
\STATE $\textsc{maxd}(\cluster_\ell)$ is the max dist btw $\cluster_\ell$ to a descendant. \\
\STATE $\textsc{mind}(\cluster_\ell)$ is the min dist btw children of $\cluster_\ell$. \\
\STATE{$\textsf{cov} \gets \textsc{maxd}(\cluster_\ell) +  4w\eta \beta LM \leq \ctbase^\ell$}
\STATE{$\textsf{sep} \gets \textsc{mind}(\cluster_\ell) -  4w\eta \beta LM \geq \ctbase^{\ell-1}$}
\IF{$\textsf{cov}$ and $\textsf{sep}$}
\FOR{$\cluster_{\ell-1} \in \children{(\cluster_{\ell})}$}
\STATE \textsc{UpdateSGTree}($\cluster_{\ell-1}, \cluster_{\ell}, 4w\eta \beta LM$)
\ENDFOR
\ELSE 
\STATE Rebuild the subtree $\cluster_\ell$ under $\cluster_r$.
\ENDIF
\ENDIF
\end{algorithmic}
\label{alg:update_sg}
\end{algorithm}

\vspace{-3mm}
\section{EFFICIENT RE-ENCODING}
\label{sec:lowDimApprox}
\vspace{-2mm}

We address further computational bottlenecks. First, the running time (and space) is a function of the dimensionality of dual encoder embeddings. This dimensionality can typically be quite large ($d=768$  \citep{devlin2019bert}). 
Second, we previously implicitly described the representations of the targets stored in the tree, as $\lblenc{(\cluster)}$ for some cluster representative $\cluster$. We do not re-apply the encoder every time we compare a given data point's embedding to a cluster representative. Instead, we use a cached version of the target embedding. However, re-running 
the dual encoder to re-encode and update this cache (say every $w$ steps of gradient descent) would be very time-consuming and require use of hardware accelerators (GPU/TPU).

We present approaches for addressing each of these computational burdens. The \nystrom method is used to reduce dimensionality. Then we use these low-dimensional representations in a low-rank regression model that approximates the re-running of the encoder model to produce the newest encoded representations (no accelerator required).  

\vspace{-3mm}
\subsection{Reducing Dimensionality}
\vspace{-2mm}

The \nystrom method factorizes a pairwise similarity matrix by representing each row and column (e.g., targets or datapoints) as a $\numlandmarks$ dimensional vector \cite[inter alia]{williams2000using,kumar2012sampling,gittens2013revisiting}. Each dimension of the $d'$ dimensional vector can be thought of as the (scaled) pairwise similarity between the representations of the row/column and a \emph{landmark} representative associated with the particular dimension. Let $\mathbf{K} \in \mathbb{R}^{n \times n}$ be the pairwise similarity matrix, $\mathbf{S} \in \{0,1\}^{n \times \numlandmarks}, \forall j \in [\numlandmarks] \sum_{i\in n} \mathbf{S}_{ij} = 1$ is an indicator matrix corresponding to the sampled landmarks, $n = |\labelset| + |\dataset|$. The \nystrom approximation is then,
    $\mathbf{K}\mathbf{S} \left (\mathbf{S}^T\mathbf{K}\mathbf{S}\right)^{-1}\mathbf{S}^T\mathbf{K}$.
It is important to note that we do not actually \emph{explicitly} instantiate $\mathbf{K}$. We sample our landmarks, compute the pairwise similarity between points/targets and the landmarks to compute $\mathbf{K}\mathbf{S}$ and the landmarks themselves $(\mathbf{S}^T\mathbf{K}\mathbf{S})^{-1}$.
The low-dimensional representation of a target is the corresponding row in  $\mathbf{K}\mathbf{S} \left (\mathbf{S}^T\mathbf{K}\mathbf{S}\right)^{-1}$ or $\mathbf{S}^T\mathbf{K}$. We can approximate all of the pairwise distance computations needed with an inner product of two $\numlandmarks$ dimensional vectors, where $\numlandmarks \ll d$.

\vspace{-3mm}
\subsection{Regression-based Approximation of Re-Encoding}
\label{sec:approxReencode}
\vspace{-2mm}
After $w$ steps of gradient descent,  our model parameters $\allparameters_{t}$ have been updated to be $\allparameters_{t+w}$. We would like to approximate the re-encoding of targets $\lblencPTstep{t+w}$ without having to run the encoder model. Let $Y_t$ be the set of cached target embeddings after $t$ steps, where $\vec{y}$ represents the cached vector for the target $y$. We want to build $Y_{t+w}$ and we will do so by training a regression model, $\mathscr{R}$ to map each $\vec{y} \in Y_t$ to $\lblencPTstepNoLbl{t+w}(y)$. We build $s'$ training data pairs of the form, $(\vec{y_1},\lblencPTstepNoLbl{t+w}(y_1)),\dots(\vec{y_{s'}},\lblencPTstepNoLbl{t+w}(y_{s'}))$. We then fit $\mathscr{R}$ using kernel ridge regression for which we use the above \nystrom approximation to the kernel matrix. 

To update the cached target representations, we apply the regression model $\mathscr{R}$ to every cached target in $Y_{t}$, creating the new cache of target representations $Y_{t+w}$. This provides an extreme speedup in re-encoding time. Instead of $\bigo(|\labelset|)$ calls to the encoder, we have $\bigo(s')$ encoder calls where $s'$ is the number of training examples for $\mathscr{R}$ and $s' \ll |\labelset|$ along with one application of $\mathscr{R}$ to each target, effectively a multiply of a $d'\times d'$ matrix.

Further details can be found in Appendix~\ref{appendix:LowDim}. The methodological techniques of \nystrom and low-rank regression are not new, of course; our contribution is the use of these techniques to facilitate efficient re-encoding, which is a costly and time consuming step in dual encoder training.

\vspace{-3mm}
\subsection{Complete Training Algorithm}
\vspace{-2mm}

In Algorithm~\ref{alg:dynnibalComplete}, we summarize \ours. Initially, we are given a set of targets $\labelset$. We select (randomly) landmarks to apply \nystrom. We encode and reduce the dimensionality of the targets. We construct an SG tree $\hc$ over these targets. 
For a given training example $\datapoint_t$ with label $\lbl_t$, we use Algorithm~\ref{alg:covertreesearchforpartition} (with inputs of $\hc$, the datapoint $\datapoint_t$ and a given $\gamma$ allowable error) to find a clustering of the labels $\partition$. From this clustering, we can define the proposal distribution $\approxsoftmax(y|\datapoint_t,\partition)$ (Equation~\ref{eq:approxclustersoftmax}). Then, we use this proposal distribution to run a given number of Metropolis-Hastings steps to provide samples from the softmax distribution. The resulting samples are used to compute a loss for the given example, $\hat{\loss}(\datapoint_t,\lbl_t)$, and the result in a gradient step for the dual-encoder parameters. After every $w$ steps, we update the tree using Algorithm~\ref{alg:update_sg}. We describe the algorithm as SGD for simplicity; a minibatch version is used in practice.

\begin{algorithm}[h]
\caption{\ours Training Algorithm}
\begin{algorithmic}[1]
\STATE{\textbf{Input:} $\labelset$: Targets, $\dataset_\mathsf{train} = \{(\datapoint_1,\lbl_1),\dots,\}$: Training, $\gamma,m$: allowed  error, $\dataenc,\lblenc$: encoders, $k$: num samples, $\eta$: learning rate, $U$: update upper-bound} 
\STATE \textsc{InitializeTargetEncoding()} \mycomment{Algorithm~\ref{alg:initialEncoding}}
\STATE Construct $\hc$ \mycomment{Algorithm from \cite{zaheer2019sg}}
\FOR{$t$ from 0 to \textrm{NumSteps}}
\STATE Sample $(x_t,y_t)$ from $\dataset_\mathsf{train}$
\STATE $\partition \gets$ \textsc{FindClustering}$(\hc, x_t, \gamma, m)$ \mycomment{Alg.~\ref{alg:covertreesearchforpartition}}
\STATE $\approxsoftmax(\lbl|\datapoint;\partition) = \frac{1}{\hat{Z}}\exp(\invtemp\ip{\dataenc(\datapoint)}{\lblenc(\lbl^{(\partition)})})$ \mycomment{Eq.~\ref{eq:approxclustersoftmax}}
\STATE Sample $N_s = \{y_{s_i} : i \in [k]\}$ with  $y_{s_i}\sim Q_\mathsf{MH}$.

\STATE \resizebox{.99\linewidth}{!}{
$\hat{Z} \gets \exp( \invtemp\ip{\dataenc(\datapoint_t)}{\lblenc(\lbl_t)}) + \sum_{i \in [k]} \exp( \invtemp\ip{\dataenc(\datapoint_t)}{\lblenc(\lbl_{s_i})} )$
}
\STATE $\hat{\loss} \gets -\invtemp\ip{\dataenc(\datapoint_t)}{\lblenc(\lbl_t)} + \log(\hat{Z}) $.
\STATE $\Theta \gets \Theta - \eta \nabla_\Theta \hat{\loss}$
\IF{$t \ \textrm{mod} \ w = 0$}
\STATE \textsc{ApproximateReEncode()} \mycomment{Algorithm~\ref{alg:reEncode}}
\STATE $\hc \gets \textsc{UpdateSGTree}(\hc, \emptyset, U)$ \mycomment{Algorithm~\ref{alg:update_sg}}
\ENDIF
\ENDFOR
\STATE \textbf{return} $\dataenc,\lblenc$
\end{algorithmic}
\label{alg:dynnibalComplete}
\end{algorithm}

\begin{table*}
\vspace{-2mm}
    \centering
    \begin{tabular}{@{}l c c c c c c@{}}
    \toprule
         Performance  & Mem. \% & R@1 & R@5 & R@10 & R@20 & R@100 \\
         \midrule
         In-batch Negatives& 0.0\% &  0.356 &  0.613 &  0.695 &  0.757 &  0.843  \\
         Uniform Negatives& 0.0\% & 0.386 &  0.644 &  0.723 &  0.775 &  0.848  \\
         \rowcolor{red!10} \ours (This Paper) &  0.3\% &  0.485 &  0.695 &  0.754 &  0.801 &  0.862  \\
         Stochastic Negative Mining & 1.0\% & 0.444 &  0.672 &  0.739 &  0.785 &  0.855 \\
         \midrule
         Stochastic Negative Mining &  3\% & 0.461 &  0.689 &  0.750 &  0.794 &  0.860 \\
         Negative Cache & 6.3\% & - & - & - & 0.784 & 0.856 \\
         Stochastic Negative Mining & 10\% & 0.468 &  0.690 &  0.758 &  0.798 &  0.862 \\
         \midrule
         Exhaustive Brute Force & 100\% &  0.500 &  0.700 &  0.765 &  0.804 &  0.866 \\
    \bottomrule
    \end{tabular}
    \caption{\textbf{Natural Questions}. We report the performance on the test set using the answer string retrieval recall \citep{karpukhin2020dense}. Methods are listed in ascending order of required accelerator memory (as \% of Exhaustive Brute Force memory). Scaling to many targets requires an approach that does not need accelerator memory, since storing such targets on the accelerator will become a limiting bottleneck in terms of memory.  \ours uses lower dimensional representations in CPU memory, an efficient tree update, and approximate re-encoding during training. \ours performs significantly better than the practical approaches with $\leq1.0\%$ memory. Error with respect to the Exhaustive Brute Force method is cut by half in terms of R@1 with respect to all methods other than the impractical 10\% memory Stochastic Negative Mining. In more detail, \ours (0.485 R@1) cuts error to 1.5 points compared to Exhaustive Brute Force (0.500), compared to Stochastic Negative Mining 3\% (0.461) cuts error to 3.9 points. Compared to  Stochastic Negative Mining 10\% (0.468) which cuts error to 3.2, \ours cuts error by over 2x more. In terms of R@5, \ours also observes a strong 2 point gain over the low-memory variant of Stochastic Negative Mining (1.0\% Mem.)}
    \label{tab:nq_table}
    \vspace{-4mm}
\end{table*}

\begin{table}[!htb]
    \vspace{2mm}
    \centering
    \begin{tabular}{@{}l@{} c c c c @{}}
    \toprule
         MRR  & Mem. \% & @1 & @10 & @100 \\
         \midrule
         In-batch Negatives & 0 & 0.140 & 0.242 & 0.254 \\
         Uniform Negatives & 0 &  0.196  & 0.305 & 0.316 \\
         Negative Cache & 0.06\% & - & 0.310 & - \\
         Negative Cache & 0.24\% & - & 0.315 & - \\
         \rowcolor{red!10} \ours   &   0.76\%  &  0.223 &  0.334  & 0.345  \\
         Negative Cache & 0.96\%   & - & 0.323 & - \\
         Stochastic Neg. & 1.0\% &  0.200  & 0.309 & 0.320  \\
         \midrule
         Stochastic Neg. & 3.0\% &  0.216  & 0.331 & 0.342  \\
         Negative Cache & 3.8\%  & - & 0.322  & - \\
         Negative Cache & 15.15\% & -  & 0.331 & - \\
         \midrule
         Exhaustive & 100\% &  0.228 & 0.345 & 0.356  \\
    \bottomrule
    \end{tabular}
    \caption{\textbf{MSMarco Results}. \ours again outperforms competing methods that use similar low percentage memory requirements. In particular, \ours outperforms 1\% memory Stochastic Negative Mining in terms of MRR@1 by more than 2 points. Furthermore, we see that even when Negative Cache uses 15.15\% memory, a 150x increase compared to our approach, \ours still produces a higher MRR@10 result.}
    \label{tab:msmarco_table}
\end{table}

\vspace{-4mm}
\section{EXPERIMENTS}
\label{sec:exp}
\vspace{-2mm}

We compare our proposed approach, \ours, to multiple state-of-the-art methods for training dual encoders. We evaluate on two retrieval datasets and also the entity linking dataset, {Zeshel} \citep{logeswaran2019zero}, in  \S\ref{app:zeshel}.

\noindent \textbf{Natural Questions (NQ)} \citep{kwiatkowski2019natural} is a dataset for passage retrieval. The data points are natural language questions. The targets are passages from Wikipedia. There are over 21 million targets and about 60K training examples. We report results using the string-match-based recall evaluation that is used by previous work \citep{karpukhin2020dense,lindgren2021efficient}. 

\noindent \textbf{MSMARCO} \citep{bajaj2016ms} contains 8.8 million targets and 500K training examples. Data points are natural language questions and targets are passages extracted from web documents. The task is to provide the correct passage for a given question. Following previous work, we report  the mean reciprocal rank.

We compare the following methods with dual encoders as transformer \citep{vaswani2017attention} initialized from pretrained RoBERTa base \citep{liu2019roberta}. See details in \S\ref{app:empirical_details}.

\noindent\textbf{In-batch Negatives}. We approximate the softmax distribution by only using the positive target labels from within the batch as in previous work (e.g., \cite{henderson2017efficient})

\noindent\textbf{Uniform Negatives}. Sample targets uniformly at random from the collection of targets and use this to approximate the softmax distribution (e.g.,  \cite{karpukhin2020dense}). 

\noindent\textbf{Stochastic Negative Mining} \citep{reddi2019stochastic}. A large number of targets is sampled uniformly at random. These targets are stored with their stale representations on the accelerator device. From this large number of targets, we approximate the softmax distribution with $k$ hard negatives. We periodically, e.g. every 100 or 500 steps refresh and change the negatives stored on the accelerator device. The performance of this method depends on how many targets are stored on the accelerator device. In most settings, memory becomes a bottleneck before the computational burden of computing pairwise similarities on the accelerator. We report performance in terms of this memory bottleneck, relative to the overall dataset size. 

\noindent\textbf{Negative Cache} \citep{lindgren2021efficient}. A recent approach that keeps track of a collection of targets in a cache on accelerator similar to Stochastic Negative Mining. However, rather than randomly refreshing, negatives are added and removed the targets in a streaming manner, FIFO or LRU. We report results directly from the published paper. 

\noindent\textbf{Oracle Exhaustive Brute-Force}. We exhaustively compute all logits and exhaustively find the top-$k$ closest targets for training. Including this method provides insights into upper-bound empirical results, but it is impractical 
in practice since it is 
extremely expensive in computation and accelerator memory, and thus also financial cost. 
While we obtain results from this ``oracle'' method on the datasets above, running on meaningfully larger data would not have been possible even if computation/budget were no object.

\noindent\textbf{\ours}. 
This paper's proposed cluster and approximate re-encoding approach. We note that the only portion of our training method sitting on accelerator memory is the low-rank regression-based approximate re-encoding model (\S\ref{sec:approxReencode}), the landmark points, and regression training data. Our use of lower dimensional (64 or 128) \nystrom embeddings along with scalable and relatively lightweight index structures stored in cheap CPU memory would allow our approach to scale to billions of targets, and further scale to many billions by leveraging the tree structure for additional targeted truncations and cluster-based approximations.  

Empirically, we find that the Metropolis-Hasting-based sampling outperforms rejection sampling. We find using lower temperature and approximating the sampling procedure to ensure that ``harder'' negatives are selected is beneficial to end task performance. Rather than considering the entire partition of targets, $\partition$, returned by Algorithm~\ref{alg:covertreesearchforpartition}, we consider the top-$k$ closest clusters. We set $m=-11$ (MSMARCO) and $m=-14$ (NQ) as the deepest level of clusters and restrict the size of frontier in Algorithm~\ref{alg:covertreesearchforpartition} to be 100. We run chains of length 2. Finally, as another empirical approximation, we select the top scoring targets from all the chains. We discuss these choices further in \S\ref{app:empirical_details}.

\subsection{Empirical Results}
\vspace{-4mm}
We report the performance of each method in terms of each dataset's given evaluation metric. Alongside the performance, we report the accelerator (GPU/TPU) memory requirement as a percentage with respect to the Oracle Exhaustive Brute-Force approach (storing all targets as full dimensional embeddings). 
As noted when describing Stochastic Negative Mining, we find that the main bottleneck for our competitors is the memory required to store target embeddings on the accelerators (along with the transformer encoders), not the computation of logits.
Thus of crucial importance are \ours's advantages over baselines with respect to reduced need for accelerator memory.

Table~\ref{tab:nq_table} shows the results on the test set for NQ. We observe that \ours outperforms In-batch Negatives, Uniform Negatives, and Stochastic Negative Mining using 1.0\% memory in all recall metrics.  In terms of recall at 1, \ours cuts the performance gap with respect to exhaustive brute force by more than half compared to all methods except 10\% memory Stochastic Negative Mining. Even using 10\% memory with Stochastic Negative Mining compared to 0.3\% for \ours leaves large performance gaps in terms of recall at 1. Stochastic Negative Mining achieves 0.468 versus \ours's 0.485.

Table~\ref{tab:msmarco_table} reports performance for all methods on MSMARCO in terms of mean reciprocal rank. Following past work \citep{lindgren2021efficient}, we evaluate on the development set. We find that performance on MSMARCO follows a similar trend as NQ in that \ours outperforms all of the other low-memory approaches. In fact, even using 150x more memory Negative Cache still does not perform as well as \ours. \ours sees a multiple point improvement in MRR@1 compared to competing methods. 

We consider the time needed to maintain the SG tree structure during training. We find on MSMARCO that, given a collection of updated target embeddings, our dynamic method (\S\ref{sec:dynamic}) is about 4x faster than the traditional approaches that rebuild the entire structure. Furthermore, we find that our regression-based approximate re-encoding is about 8x faster than running the encoder model on MSMARCO, and requires much less accelerator expense. In terms of training steps-per-second efficiency between index maintenance, our approach does involve more computation: it is about four times slower than Stochastic Negative Mining's simple logit-based selection on a small subset of targets; however, some of that is balanced by faster index updates. We expect that additional engineering design could significantly speed up our method; in any case, running Stochastic Negative Mining for more training steps did not increase its accuracy.
\section{RELATED WORK}

\textbf{Per-Target Free Parameters}. 
A large body of work trains classifiers with a massive number of targets. In much of this work, the targets are directly parameterized with a $d$ dimensional vector, rather an embedding produced by an encoder \citep{bengio2008adaptive,choromanska2015logarithmic,jernite2017simultaneous,daume2017logarithmic,sun2019contextual,yu2020pecos}. Most closely related to our work are methods that adaptively re-arrange tree-structured classifiers \cite[inter alia]{sun2019contextual,kobus21aOnline}.
\cite{blanc2018adaptive,rawat2019sampled} use kernel-based approximations of the softmax, which gives strong theoretical guarantees. Future work could consider how to extend these methods to the dual-encoder setting.

\textbf{Partition Functions \& Probabilistic Models.} 
MCMC methods are widely used for posterior inference \citep[inter alia]{
neal1993probabilistic,neal2000markov,chang2013parallel,zaheer2016exponential,zaheer2017canopy}. These methods are concerned with finding a high quality estimate of the distribution as the end task. In our setting, we need an estimate for every training point in each step of training. Most similar (and the inspiration for our approach) is the Canopy-sampler \citep{zaheer2017canopy}, which uses a cover tree to derive an efficient rejection sampler/Metropolis-Hastings algorithm for exponential family mixture models. Unlike that work, we consider approximation of the softmax, moving embedded representations during training, and the relationship between the approximation quality and the gradient bias of our estimator. \cite{vembu2009probabilistic} use MCMC-based approximations of distributions with large output spaces, but does not use the clustering-based approximations presented here nor does it consider the dual-encoder setting.

\textbf{Reparameterization, discrete distributions}. OS* sampling, the Gumbel-Max trick, and Perturb-and-MAP, are alternative methods for sampling from distributions such as the softmax  \citep{dymetman2012algorithm,tucker2017rebar,maddison2016concrete,paulus2020gradient,jang2016categorical, huijben2022review}. Future work could consider combining such methods and our approach. 

\textbf{Nearest Neighbor Search, Clustering, Dynamic Structures}. Cover Trees \citep{beygelzimer2006cover} and SG Trees \citep{zaheer2019sg} were originally used as nearest neighbor indexes. There are many tree and DAG-based index structures that support addition and deletion of items such as Navigating Nets \citep{krauthgamer2004navigating},  HNSW \citep{malkov2014approximate}, and NN-Descent \citep{dong2011efficient}. Also, closely related is work on maintaining dynamic hash-based indexes \citep{jain2008online,zhang2020continuously}. Apart from nearest neighbor index structures, scalable methods for hierarchical clustering organize large datasets into tree structures \citep{bateni2017affinity,moseley2019framework,dhulipala2022hierarchical}. Other work builds these clusterings in an incremental or dynamic setting \citep{liberty2016algorithm,choromanska2012online,kobren2017hierarchical,menon2019online}. Maintaining structures in a dynamic setting is studied in graph algorithms (e.g., minimum spanning tree).   Efficient and parallelizable dynamic graph algorithms exist \cite[inter alia]{Sleator1981ADS,holm2001poly,tseng2019batch,dhulipala2020parallel}.

\noindent \textbf{Cover Trees.} New algorithms and extensions of cover trees have been developed \cite[inter alia]{curtin2013tree,curtin2015plug,izbicki2015faster,zaheer2019sg,elkin2021new,gu2022parallel}. Cover trees are also widely used in other settings such as $k$-means clustering \citep{curtin2017dual} and Gaussian processes \citep{terenin2022numerically}. 

\noindent \textbf{Task Specific Related Work.} Learned models for passage retrieval and entity linking are extensively studied \cite[inter alia]{wu2019scalable,karpukhin2020dense,bhowmik2021fast,qu2021rocketqa,thakur2021beir,ren2021rocketqav2,ni2021large,fitzgerald-etal-2021-moleman,gao2021condenser,dai2022promptagator,izacard2022few}. Most similar to our work is ANCE \citep{xiong2020approximate}, which uses a nearest neighbor index for a contrastive objective. Other work includes alternative architectures to dual encoders \citep{khattab2020colbert,qian2022multi,santhanam2021colbertv2} and learning efficient hashing-based representations \citep{yamada2021efficient}.  
\section{CONCLUSION}
We present \ours, a dynamic tree structure for clustering-based approximations of the softmax distribution. Our algorithm efficiently gives provably accurate samples for training dual-encoders with cross-entropy loss. 
Empirically, \ours outperforms state-of-the-art on datasets with over twenty million targets, reducing error by half compared to an exhaustive oracle. 
We find that our dynamic maintenance of the tree structure can be 8x faster than exhaustive re-indexing. Furthermore, our approach outperforms state-of-the-art while using 150x less accelerator memory.

\bibliography{custom}

\begin{thebibliography}{84}
\providecommand{\natexlab}[1]{#1}
\providecommand{\url}[1]{\texttt{#1}}
\expandafter\ifx\csname urlstyle\endcsname\relax
  \providecommand{\doi}[1]{doi: #1}\else
  \providecommand{\doi}{doi: \begingroup \urlstyle{rm}\Url}\fi

\bibitem[Agarwal et~al.(2022)Agarwal, Angell, Monath, and
  McCallum]{agarwal2022entity}
Dhruv Agarwal, Rico Angell, Nicholas Monath, and Andrew McCallum.
\newblock Entity linking via explicit mention-mention coreference modeling.
\newblock In \emph{Conference of the North American Chapter of the Association
  for Computational Linguistics: Human Language Technologies (NAACL-HLT)},
  2022.

\bibitem[Ajalloeian and Stich(2020)]{ajalloeian2020convergence}
Ahmad Ajalloeian and Sebastian~U Stich.
\newblock On the convergence of sgd with biased gradients.
\newblock \emph{arXiv preprint arXiv:2008.00051}, 2020.

\bibitem[Bachem et~al.(2016)Bachem, Lucic, Hassani, and Krause]{bachem2016fast}
Olivier Bachem, Mario Lucic, Hamed Hassani, and Andreas Krause.
\newblock Fast and provably good seedings for k-means.
\newblock \emph{Advances in neural information processing systems (NeurIPS)},
  2016.

\bibitem[Bajaj et~al.(2016)Bajaj, Campos, Craswell, Deng, Gao, Liu, Majumder,
  McNamara, Mitra, Nguyen, et~al.]{bajaj2016ms}
Payal Bajaj, Daniel Campos, Nick Craswell, Li~Deng, Jianfeng Gao, Xiaodong Liu,
  Rangan Majumder, Andrew McNamara, Bhaskar Mitra, Tri Nguyen, et~al.
\newblock Ms marco: A human generated machine reading comprehension dataset.
\newblock \emph{arXiv preprint arXiv:1611.09268}, 2016.

\bibitem[Bateni et~al.(2017)Bateni, Behnezhad, Derakhshan, Hajiaghayi, Kiveris,
  Lattanzi, and Mirrokni]{bateni2017affinity}
Mohammadhossein Bateni, Soheil Behnezhad, Mahsa Derakhshan, MohammadTaghi
  Hajiaghayi, Raimondas Kiveris, Silvio Lattanzi, and Vahab Mirrokni.
\newblock Affinity clustering: Hierarchical clustering at scale.
\newblock \emph{Advances in Neural Information Processing Systems (NeurIPS)},
  2017.

\bibitem[Bengio and Sen{\'e}cal(2008)]{bengio2008adaptive}
Yoshua Bengio and Jean-S{\'e}bastien Sen{\'e}cal.
\newblock Adaptive importance sampling to accelerate training of a neural
  probabilistic language model.
\newblock \emph{IEEE Transactions on Neural Networks}, 2008.

\bibitem[Beygelzimer et~al.(2006)Beygelzimer, Kakade, and
  Langford]{beygelzimer2006cover}
Alina Beygelzimer, Sham Kakade, and John Langford.
\newblock Cover trees for nearest neighbor.
\newblock In \emph{International conference on Machine learning (ICML)}, 2006.

\bibitem[Bhowmik et~al.(2021)Bhowmik, Stratos, and de~Melo]{bhowmik2021fast}
Rajarshi Bhowmik, Karl Stratos, and Gerard de~Melo.
\newblock Fast and effective biomedical entity linking using a dual encoder.
\newblock \emph{arXiv preprint arXiv:2103.05028}, 2021.

\bibitem[Blanc and Rendle(2018)]{blanc2018adaptive}
Guy Blanc and Steffen Rendle.
\newblock Adaptive sampled softmax with kernel based sampling.
\newblock In \emph{International Conference on Machine Learning (ICML)}, 2018.

\bibitem[Cai(2000)]{cai2000exact}
Haiyan Cai.
\newblock Exact bound for the convergence of metropolis chains.
\newblock \emph{Stochastic analysis and applications}, 2000.

\bibitem[Chang and Fisher~III(2013)]{chang2013parallel}
Jason Chang and John~W Fisher~III.
\newblock Parallel sampling of dp mixture models using sub-cluster splits.
\newblock \emph{Advances in Neural Information Processing Systems (NeurIPS)},
  2013.

\bibitem[Choromanska and Monteleoni(2012)]{choromanska2012online}
Anna Choromanska and Claire Monteleoni.
\newblock Online clustering with experts.
\newblock \emph{Artificial Intelligence and Statistics (AISTATS)}, 2012.

\bibitem[Choromanska and Langford(2015)]{choromanska2015logarithmic}
Anna~E Choromanska and John Langford.
\newblock Logarithmic time online multiclass prediction.
\newblock \emph{Advances in Neural Information Processing Systems (NeurIPS)},
  2015.

\bibitem[Curtin et~al.(2013)Curtin, March, Ram, Anderson, Gray, and
  Isbell]{curtin2013tree}
Ryan Curtin, William March, Parikshit Ram, David Anderson, Alexander Gray, and
  Charles Isbell.
\newblock Tree-independent dual-tree algorithms.
\newblock In \emph{International Conference on Machine Learning (ICML)}, 2013.

\bibitem[Curtin(2017)]{curtin2017dual}
Ryan~R Curtin.
\newblock A dual-tree algorithm for fast k-means clustering with large k.
\newblock In \emph{Proceedings of the SIAM International Conference on Data
  Mining (SDM)}, 2017.

\bibitem[Curtin et~al.(2015)Curtin, Lee, March, and Ram]{curtin2015plug}
Ryan~R Curtin, Dongryeol Lee, William~B March, and Parikshit Ram.
\newblock Plug-and-play dual-tree algorithm runtime analysis.
\newblock \emph{Journal of Machine Learning Research (JMLR)}, 2015.

\bibitem[Dai et~al.(2022)Dai, Zhao, Ma, Luan, Ni, Lu, Bakalov, Guu, Hall, and
  Chang]{dai2022promptagator}
Zhuyun Dai, Vincent~Y Zhao, Ji~Ma, Yi~Luan, Jianmo Ni, Jing Lu, Anton Bakalov,
  Kelvin Guu, Keith~B Hall, and Ming-Wei Chang.
\newblock Promptagator: Few-shot dense retrieval from 8 examples.
\newblock \emph{arXiv preprint arXiv:2209.11755}, 2022.

\bibitem[Daum{\'e}~III et~al.(2017)Daum{\'e}~III, Karampatziakis, Langford, and
  Mineiro]{daume2017logarithmic}
Hal Daum{\'e}~III, Nikos Karampatziakis, John Langford, and Paul Mineiro.
\newblock Logarithmic time one-against-some.
\newblock In \emph{International Conference on Machine Learning (ICML)}, 2017.

\bibitem[Devlin et~al.(2019)Devlin, Chang, Lee, and Toutanova]{devlin2019bert}
Jacob Devlin, Ming-Wei Chang, Kenton Lee, and Kristina Toutanova.
\newblock {BERT}: Pre-training of deep bidirectional transformers for language
  understanding.
\newblock In \emph{Proceedings of Conference of the North {A}merican Chapter of
  the Association for Computational Linguistics: Human Language Technologies
  (NAACL-HLT)}, 2019.

\bibitem[Dhulipala et~al.(2020)Dhulipala, Durfee, Kulkarni, Peng, Sawlani, and
  Sun]{dhulipala2020parallel}
Laxman Dhulipala, David Durfee, Janardhan Kulkarni, Richard Peng, Saurabh
  Sawlani, and Xiaorui Sun.
\newblock Parallel batch-dynamic graphs: Algorithms and lower bounds.
\newblock In \emph{Symposium on Discrete Algorithms (SODA)}, 2020.

\bibitem[Dhulipala et~al.(2022)Dhulipala, Eisenstat, Lacki, Mirrokni, and
  Shi]{dhulipala2022hierarchical}
Laxman Dhulipala, David Eisenstat, Jakub Lacki, Vahab Mirrokni, and Jessica
  Shi.
\newblock Hierarchical agglomerative graph clustering in poly-logarithmic
  depth.
\newblock In \emph{Advances in Neural Information Processing Systems
  (NeurIPS)}, 2022.

\bibitem[Dong et~al.(2011)Dong, Moses, and Li]{dong2011efficient}
Wei Dong, Charikar Moses, and Kai Li.
\newblock Efficient k-nearest neighbor graph construction for generic
  similarity measures.
\newblock In \emph{Proceedings of the international conference on World wide
  web (WWW)}, 2011.

\bibitem[Dymetman et~al.(2012)Dymetman, Bouchard, and
  Carter]{dymetman2012algorithm}
Marc Dymetman, Guillaume Bouchard, and Simon Carter.
\newblock The os* algorithm: a joint approach to exact optimization and
  sampling.
\newblock \emph{arXiv preprint arXiv:1207.0742}, 2012.

\bibitem[Elkin and Kurlin(2021)]{elkin2021new}
Yury Elkin and Vitaliy Kurlin.
\newblock A new compressed cover tree guarantees a near linear parameterized
  complexity for all $ k $-nearest neighbors search in metric spaces.
\newblock \emph{arXiv preprint arXiv:2111.15478}, 2021.

\bibitem[FitzGerald et~al.(2021)FitzGerald, Bikel, Botha, Gillick, Kwiatkowski,
  and McCallum]{fitzgerald-etal-2021-moleman}
Nicholas FitzGerald, Dan Bikel, Jan Botha, Daniel Gillick, Tom Kwiatkowski, and
  Andrew McCallum.
\newblock {MOLEMAN}: Mention-only linking of entities with a mention annotation
  network.
\newblock In \emph{Proceedings of the Association for Computational Linguistics
  and the International Joint Conference on Natural Language Processing
  (ACL-IJCNLP)}, 2021.

\bibitem[Gao and Callan(2021)]{gao2021condenser}
Luyu Gao and Jamie Callan.
\newblock Condenser: a pre-training architecture for dense retrieval.
\newblock \emph{arXiv preprint arXiv:2104.08253}, 2021.

\bibitem[Gillick et~al.(2019)Gillick, Kulkarni, Lansing, Presta, Baldridge, Ie,
  and Garcia-Olano]{gillick2019learning}
Dan Gillick, Sayali Kulkarni, Larry Lansing, Alessandro Presta, Jason
  Baldridge, Eugene Ie, and Diego Garcia-Olano.
\newblock Learning dense representations for entity retrieval.
\newblock In \emph{Conference on Computational Natural Language Learning
  (CoNLL)}, 2019.

\bibitem[Gittens and Mahoney(2013)]{gittens2013revisiting}
Alex Gittens and Michael Mahoney.
\newblock Revisiting the nystrom method for improved large-scale machine
  learning.
\newblock In \emph{International Conference on Machine Learning (ICML)}, 2013.

\bibitem[Gu et~al.(2022)Gu, Napier, Sun, and Wang]{gu2022parallel}
Yan Gu, Zachary Napier, Yihan Sun, and Letong Wang.
\newblock Parallel cover trees and their applications.
\newblock In \emph{Proceedings of the ACM Symposium on Parallelism in
  Algorithms and Architectures}, 2022.

\bibitem[Guu et~al.(2020)Guu, Lee, Tung, Pasupat, and Chang]{guu2020retrieval}
Kelvin Guu, Kenton Lee, Zora Tung, Panupong Pasupat, and Mingwei Chang.
\newblock Retrieval augmented language model pre-training.
\newblock In \emph{International Conference on Machine Learning (ICML)}, 2020.

\bibitem[Henderson et~al.(2017)Henderson, Al-Rfou, Strope, Sung, Luk{\'a}cs,
  Guo, Kumar, Miklos, and Kurzweil]{henderson2017efficient}
Matthew Henderson, Rami Al-Rfou, Brian Strope, Yun-Hsuan Sung, L{\'a}szl{\'o}
  Luk{\'a}cs, Ruiqi Guo, Sanjiv Kumar, Balint Miklos, and Ray Kurzweil.
\newblock Efficient natural language response suggestion for smart reply.
\newblock \emph{arXiv preprint arXiv:1705.00652}, 2017.

\bibitem[Holm et~al.(2001)Holm, De~Lichtenberg, and Thorup]{holm2001poly}
Jacob Holm, Kristian De~Lichtenberg, and Mikkel Thorup.
\newblock Poly-logarithmic deterministic fully-dynamic algorithms for
  connectivity, minimum spanning tree, 2-edge, and biconnectivity.
\newblock \emph{Journal of the ACM (JACM)}, 2001.

\bibitem[Huijben et~al.(2022)Huijben, Kool, Paulus, and
  Van~Sloun]{huijben2022review}
Iris~AM Huijben, Wouter Kool, Max~Benedikt Paulus, and Ruud~JG Van~Sloun.
\newblock A review of the gumbel-max trick and its extensions for discrete
  stochasticity in machine learning.
\newblock \emph{IEEE Transactions on Pattern Analysis and Machine
  Intelligence}, 2022.

\bibitem[Izacard et~al.(2022)Izacard, Lewis, Lomeli, Hosseini, Petroni, Schick,
  Dwivedi-Yu, Joulin, Riedel, and Grave]{izacard2022few}
Gautier Izacard, Patrick Lewis, Maria Lomeli, Lucas Hosseini, Fabio Petroni,
  Timo Schick, Jane Dwivedi-Yu, Armand Joulin, Sebastian Riedel, and Edouard
  Grave.
\newblock Few-shot learning with retrieval augmented language models.
\newblock \emph{arXiv preprint arXiv:2208.03299}, 2022.

\bibitem[Izbicki and Shelton(2015)]{izbicki2015faster}
Mike Izbicki and Christian Shelton.
\newblock Faster cover trees.
\newblock In \emph{International Conference on Machine Learning (ICML)}, 2015.

\bibitem[Jain et~al.(2008)Jain, Kulis, Dhillon, and Grauman]{jain2008online}
Prateek Jain, Brian Kulis, Inderjit Dhillon, and Kristen Grauman.
\newblock Online metric learning and fast similarity search.
\newblock \emph{Advances in neural information processing systems (NeurIPS)},
  2008.

\bibitem[Jang et~al.(2016)Jang, Gu, and Poole]{jang2016categorical}
Eric Jang, Shixiang Gu, and Ben Poole.
\newblock Categorical reparameterization with gumbel-softmax.
\newblock \emph{arXiv preprint arXiv:1611.01144}, 2016.

\bibitem[Jasinska-Kobus et~al.(2021)Jasinska-Kobus, Wydmuch, Thiruvenkatachari,
  and Dembczynski]{kobus21aOnline}
Kalina Jasinska-Kobus, Marek Wydmuch, Devanathan Thiruvenkatachari, and
  Krzysztof Dembczynski.
\newblock Online probabilistic label trees.
\newblock In \emph{Proceedings of International Conference on Artificial
  Intelligence and Statistics (AISTATS)}, 2021.

\bibitem[Jernite et~al.(2017)Jernite, Choromanska, and
  Sontag]{jernite2017simultaneous}
Yacine Jernite, Anna Choromanska, and David Sontag.
\newblock Simultaneous learning of trees and representations for extreme
  classification and density estimation.
\newblock In \emph{International Conference on Machine Learning (ICML)}, 2017.

\bibitem[Karpukhin et~al.(2020)Karpukhin, O{\u{g}}uz, Min, Lewis, Wu, Edunov,
  Chen, and Yih]{karpukhin2020dense}
Vladimir Karpukhin, Barlas O{\u{g}}uz, Sewon Min, Patrick Lewis, Ledell Wu,
  Sergey Edunov, Danqi Chen, and Wen-tau Yih.
\newblock Dense passage retrieval for open-domain question answering.
\newblock \emph{arXiv preprint arXiv:2004.04906}, 2020.

\bibitem[Khattab and Zaharia(2020)]{khattab2020colbert}
Omar Khattab and Matei Zaharia.
\newblock Colbert: Efficient and effective passage search via contextualized
  late interaction over bert.
\newblock In \emph{Proceedings of ACM SIGIR conference on research and
  development in Information Retrieval}, 2020.

\bibitem[Kobren et~al.(2017)Kobren, Monath, Krishnamurthy, and
  McCallum]{kobren2017hierarchical}
Ari Kobren, Nicholas Monath, Akshay Krishnamurthy, and Andrew McCallum.
\newblock A hierarchical algorithm for extreme clustering.
\newblock \emph{Conference on Knowledge Discovery and Data Mining (KDD)}, 2017.

\bibitem[Krauthgamer and Lee(2004)]{krauthgamer2004navigating}
Robert Krauthgamer and James~R Lee.
\newblock Navigating nets: Simple algorithms for proximity search.
\newblock In \emph{Symposium on Discrete algorithms (SODA)}, 2004.

\bibitem[Kumar et~al.(2012)Kumar, Mohri, and Talwalkar]{kumar2012sampling}
Sanjiv Kumar, Mehryar Mohri, and Ameet Talwalkar.
\newblock Sampling methods for the nystr{\"o}m method.
\newblock \emph{The Journal of Machine Learning Research}, 2012.

\bibitem[Kwiatkowski et~al.(2019)Kwiatkowski, Palomaki, Redfield, Collins,
  Parikh, Alberti, Epstein, Polosukhin, Devlin, Lee, Toutanova, Jones, Kelcey,
  Chang, Dai, Uszkoreit, Le, and Petrov]{kwiatkowski2019natural}
Tom Kwiatkowski, Jennimaria Palomaki, Olivia Redfield, Michael Collins, Ankur
  Parikh, Chris Alberti, Danielle Epstein, Illia Polosukhin, Jacob Devlin,
  Kenton Lee, Kristina Toutanova, Llion Jones, Matthew Kelcey, Ming-Wei Chang,
  Andrew~M. Dai, Jakob Uszkoreit, Quoc Le, and Slav Petrov.
\newblock {Natural Questions: A Benchmark for Question Answering Research}.
\newblock \emph{Transactions of the Association for Computational Linguistics
  (TACL)}, 2019.

\bibitem[Liberty et~al.(2016)Liberty, Sriharsha, and
  Sviridenko]{liberty2016algorithm}
Edo Liberty, Ram Sriharsha, and Maxim Sviridenko.
\newblock An algorithm for online k-means clustering.
\newblock \emph{Algorithm Engineering and Experiments (ALENEX)}, 2016.

\bibitem[Lindgren et~al.(2021)Lindgren, Reddi, Guo, and
  Kumar]{lindgren2021efficient}
Erik Lindgren, Sashank~J. Reddi, Ruiqi Guo, and Sanjiv Kumar.
\newblock Efficient training of retrieval models using negative cache.
\newblock In \emph{Advances in Neural Information Processing Systems
  (NeurIPS)}, 2021.

\bibitem[Liu et~al.(2019)Liu, Ott, Goyal, Du, Joshi, Chen, Levy, Lewis,
  Zettlemoyer, and Stoyanov]{liu2019roberta}
Yinhan Liu, Myle Ott, Naman Goyal, Jingfei Du, Mandar Joshi, Danqi Chen, Omer
  Levy, Mike Lewis, Luke Zettlemoyer, and Veselin Stoyanov.
\newblock Roberta: A robustly optimized bert pretraining approach.
\newblock \emph{arXiv preprint arXiv:1907.11692}, 2019.

\bibitem[Logeswaran et~al.(2019)Logeswaran, Chang, Lee, Toutanova, Devlin, and
  Lee]{logeswaran2019zero}
Lajanugen Logeswaran, Ming-Wei Chang, Kenton Lee, Kristina Toutanova, Jacob
  Devlin, and Honglak Lee.
\newblock Zero-shot entity linking by reading entity descriptions.
\newblock In \emph{Association for Computational Linguistics (ACL)}, 2019.

\bibitem[Maddison et~al.(2016)Maddison, Mnih, and Teh]{maddison2016concrete}
Chris~J Maddison, Andriy Mnih, and Yee~Whye Teh.
\newblock The concrete distribution: A continuous relaxation of discrete random
  variables.
\newblock \emph{arXiv preprint arXiv:1611.00712}, 2016.

\bibitem[Malkov et~al.(2014)Malkov, Ponomarenko, Logvinov, and
  Krylov]{malkov2014approximate}
Yury Malkov, Alexander Ponomarenko, Andrey Logvinov, and Vladimir Krylov.
\newblock Approximate nearest neighbor algorithm based on navigable small world
  graphs.
\newblock \emph{Information Systems}, 2014.

\bibitem[Mengersen and Tweedie(1996)]{mengersen1996rates}
Kerrie~L Mengersen and Richard~L Tweedie.
\newblock Rates of convergence of the hastings and metropolis algorithms.
\newblock \emph{The annals of Statistics}, 1996.

\bibitem[Menon et~al.(2019)Menon, Rajagopalan, Sumengen, Citovsky, Cao, and
  Kumar]{menon2019online}
Aditya~Krishna Menon, Anand Rajagopalan, Baris Sumengen, Gui Citovsky, Qin Cao,
  and Sanjiv Kumar.
\newblock Online hierarchical clustering approximations.
\newblock \emph{arXiv Pre-print}, 2019.

\bibitem[Moseley et~al.(2019)Moseley, Lu, Lattanzi, and
  Lavastida]{moseley2019framework}
Benjamin Moseley, Kefu Lu, Silvio Lattanzi, and Thomas Lavastida.
\newblock A framework for parallelizing hierarchical clustering methods.
\newblock \emph{European Conference on Machine Learning and Principles and
  Practice of Knowledge Discovery in Databases (ECML PKDD)}, 2019.

\bibitem[Neal(1993)]{neal1993probabilistic}
Radford~M Neal.
\newblock \emph{Probabilistic inference using Markov chain Monte Carlo
  methods}.
\newblock Department of Computer Science, University of Toronto Toronto, ON,
  Canada, 1993.

\bibitem[Neal(2000)]{neal2000markov}
Radford~M Neal.
\newblock Markov chain sampling methods for dirichlet process mixture models.
\newblock \emph{Journal of computational and graphical statistics}, 2000.

\bibitem[Ni et~al.(2021)Ni, Qu, Lu, Dai, {\'A}brego, Ma, Zhao, Luan, Hall,
  Chang, et~al.]{ni2021large}
Jianmo Ni, Chen Qu, Jing Lu, Zhuyun Dai, Gustavo~Hern{\'a}ndez {\'A}brego,
  Ji~Ma, Vincent~Y Zhao, Yi~Luan, Keith~B Hall, Ming-Wei Chang, et~al.
\newblock Large dual encoders are generalizable retrievers.
\newblock \emph{arXiv preprint arXiv:2112.07899}, 2021.

\bibitem[Paulus et~al.(2020)Paulus, Choi, Tarlow, Krause, and
  Maddison]{paulus2020gradient}
Max Paulus, Dami Choi, Daniel Tarlow, Andreas Krause, and Chris~J Maddison.
\newblock Gradient estimation with stochastic softmax tricks.
\newblock \emph{Advances in Neural Information Processing Systems (NeurIPS)},
  2020.

\bibitem[Qian et~al.(2022)Qian, Lee, Duddu, Dai, Brahma, Naim, Lei, and
  Zhao]{qian2022multi}
Yujie Qian, Jinhyuk Lee, Sai Meher~Karthik Duddu, Zhuyun Dai, Siddhartha
  Brahma, Iftekhar Naim, Tao Lei, and Vincent~Y Zhao.
\newblock Multi-vector retrieval as sparse alignment.
\newblock \emph{arXiv preprint arXiv:2211.01267}, 2022.

\bibitem[Qu et~al.(2021)Qu, Ding, Liu, Liu, Ren, Zhao, Dong, Wu, and
  Wang]{qu2021rocketqa}
Yingqi Qu, Yuchen Ding, Jing Liu, Kai Liu, Ruiyang Ren, Wayne~Xin Zhao, Daxiang
  Dong, Hua Wu, and Haifeng Wang.
\newblock Rocketqa: An optimized training approach to dense passage retrieval
  for open-domain question answering.
\newblock In \emph{Conference of the North American Chapter of the Association
  for Computational Linguistics: Human Language Technologies (NAACL-HLT)},
  2021.

\bibitem[Rawat et~al.(2019)Rawat, Chen, Yu, Suresh, and
  Kumar]{rawat2019sampled}
Ankit~Singh Rawat, Jiecao Chen, Felix Yu, Ananda~Theertha Suresh, and Sanjiv
  Kumar.
\newblock Sampled softmax with random fourier features.
\newblock \emph{Advances in Neural Information Processing Systems (NeurIPS)},
  2019.

\bibitem[Rawat et~al.(2020)Rawat, Menon, Veit, Yu, Reddi, and
  Kumar]{rawat2020doubly}
Ankit~Singh Rawat, Aditya~Krishna Menon, Andreas Veit, Felix Yu, Sashank~J
  Reddi, and Sanjiv Kumar.
\newblock Doubly-stochastic mining for heterogeneous retrieval.
\newblock \emph{arXiv preprint arXiv:2004.10915}, 2020.

\bibitem[Reddi et~al.(2019)Reddi, Kale, Yu, Holtmann-Rice, Chen, and
  Kumar]{reddi2019stochastic}
Sashank~J. Reddi, Satyen Kale, Felix Yu, Daniel Holtmann-Rice, Jiecao Chen, and
  Sanjiv Kumar.
\newblock Stochastic negative mining for learning with large output spaces.
\newblock In \emph{Proceedings of the Twenty-Second International Conference on
  Artificial Intelligence and Statistics}, 2019.

\bibitem[Ren et~al.(2021)Ren, Qu, Liu, Zhao, She, Wu, Wang, and
  Wen]{ren2021rocketqav2}
Ruiyang Ren, Yingqi Qu, Jing Liu, Wayne~Xin Zhao, Qiaoqiao She, Hua Wu, Haifeng
  Wang, and Ji-Rong Wen.
\newblock Rocketqav2: A joint training method for dense passage retrieval and
  passage re-ranking.
\newblock \emph{arXiv preprint arXiv:2110.07367}, 2021.

\bibitem[Santhanam et~al.(2021)Santhanam, Khattab, Saad-Falcon, Potts, and
  Zaharia]{santhanam2021colbertv2}
Keshav Santhanam, Omar Khattab, Jon Saad-Falcon, Christopher Potts, and Matei
  Zaharia.
\newblock Colbertv2: Effective and efficient retrieval via lightweight late
  interaction.
\newblock \emph{arXiv preprint arXiv:2112.01488}, 2021.

\bibitem[Sleator and Tarjan(1981)]{Sleator1981ADS}
Daniel~Dominic Sleator and Robert~Endre Tarjan.
\newblock A data structure for dynamic trees.
\newblock In \emph{Symposium on the Theory of Computing (STOC)}, 1981.

\bibitem[Sun et~al.(2019)Sun, Beygelzimer, Iii, Langford, and
  Mineiro]{sun2019contextual}
Wen Sun, Alina Beygelzimer, Hal~Daum{\'e} Iii, John Langford, and Paul Mineiro.
\newblock Contextual memory trees.
\newblock In \emph{International Conference on Machine Learning (ICML)}, 2019.

\bibitem[Terenin et~al.(2022)Terenin, Burt, Artemev, Flaxman, van~der Wilk,
  Rasmussen, and Ge]{terenin2022numerically}
Alexander Terenin, David~R Burt, Artem Artemev, Seth Flaxman, Mark van~der
  Wilk, Carl~Edward Rasmussen, and Hong Ge.
\newblock Numerically stable sparse gaussian processes via minimum separation
  using cover trees.
\newblock \emph{arXiv preprint arXiv:2210.07893}, 2022.

\bibitem[Thakur et~al.(2021)Thakur, Reimers, R{\"u}ckl{\'e}, Srivastava, and
  Gurevych]{thakur2021beir}
Nandan Thakur, Nils Reimers, Andreas R{\"u}ckl{\'e}, Abhishek Srivastava, and
  Iryna Gurevych.
\newblock Beir: A heterogenous benchmark for zero-shot evaluation of
  information retrieval models.
\newblock \emph{arXiv preprint arXiv:2104.08663}, 2021.

\bibitem[Tseng et~al.(2019)Tseng, Dhulipala, and Blelloch]{tseng2019batch}
Thomas Tseng, Laxman Dhulipala, and Guy Blelloch.
\newblock Batch-parallel euler tour trees.
\newblock In \emph{2019 Proceedings of the Twenty-First Workshop on Algorithm
  Engineering and Experiments (ALENEX)}, 2019.

\bibitem[Tucker et~al.(2017)Tucker, Mnih, Maddison, Lawson, and
  Sohl-Dickstein]{tucker2017rebar}
George Tucker, Andriy Mnih, Chris~J Maddison, John Lawson, and Jascha
  Sohl-Dickstein.
\newblock Rebar: Low-variance, unbiased gradient estimates for discrete latent
  variable models.
\newblock \emph{Advances in Neural Information Processing Systems}, 2017.

\bibitem[Vaswani et~al.(2017)Vaswani, Shazeer, Parmar, Uszkoreit, Jones, Gomez,
  Kaiser, and Polosukhin]{vaswani2017attention}
Ashish Vaswani, Noam Shazeer, Niki Parmar, Jakob Uszkoreit, Llion Jones,
  Aidan~N Gomez, {\L}ukasz Kaiser, and Illia Polosukhin.
\newblock Attention is all you need.
\newblock \emph{Advances in neural information processing systems (NeurIPS)},
  2017.

\bibitem[Vembu et~al.(2009)Vembu, G{\"a}rtner, and
  Boley]{vembu2009probabilistic}
Shankar Vembu, Thomas G{\"a}rtner, and Mario Boley.
\newblock Probabilistic structured predictors.
\newblock In \emph{Uncertainty in Artificial Intelligence (UAI)}, 2009.

\bibitem[Williams and Seeger(2000)]{williams2000using}
Christopher Williams and Matthias Seeger.
\newblock Using the nystr{\"o}m method to speed up kernel machines.
\newblock \emph{Advances in neural information processing systems (NeurIPS)},
  2000.

\bibitem[Wu et~al.(2020)Wu, Yang, Zhang, Hong, Fu, and Wang]{wu2020joint}
Le~Wu, Yonghui Yang, Kun Zhang, Richang Hong, Yanjie Fu, and Meng Wang.
\newblock Joint item recommendation and attribute inference: An adaptive graph
  convolutional network approach.
\newblock In \emph{ACM SIGIR conference on research and development in
  Information Retrieval (SIGIR)}, 2020.

\bibitem[Wu et~al.(2019)Wu, Petroni, Josifoski, Riedel, and
  Zettlemoyer]{wu2019scalable}
Ledell Wu, Fabio Petroni, Martin Josifoski, Sebastian Riedel, and Luke
  Zettlemoyer.
\newblock Scalable zero-shot entity linking with dense entity retrieval.
\newblock \emph{arXiv preprint arXiv:1911.03814}, 2019.

\bibitem[Xiong et~al.(2020)Xiong, Xiong, Li, Tang, Liu, Bennett, Ahmed, and
  Overwijk]{xiong2020approximate}
Lee Xiong, Chenyan Xiong, Ye~Li, Kwok-Fung Tang, Jialin Liu, Paul~N Bennett,
  Junaid Ahmed, and Arnold Overwijk.
\newblock Approximate nearest neighbor negative contrastive learning for dense
  text retrieval.
\newblock In \emph{International Conference on Learning Representations
  (ICLR)}, 2020.

\bibitem[Xiong et~al.(2022)Xiong, Chang, Hsieh, Yu, and
  Dhillon]{xiong2022extreme}
Yuanhao Xiong, Wei-Cheng Chang, Cho-Jui Hsieh, Hsiang-Fu Yu, and Inderjit
  Dhillon.
\newblock Extreme {Z}ero-{S}hot learning for extreme text classification.
\newblock In \emph{Conference of the North American Chapter of the Association
  for Computational Linguistics: Human Language Technologies (NAACL-HLT)},
  2022.

\bibitem[Yamada et~al.(2021)Yamada, Asai, and Hajishirzi]{yamada2021efficient}
Ikuya Yamada, Akari Asai, and Hannaneh Hajishirzi.
\newblock Efficient passage retrieval with hashing for open-domain question
  answering.
\newblock \emph{arXiv preprint arXiv:2106.00882}, 2021.

\bibitem[Yu et~al.(2022)Yu, Zhong, Zhang, Chang, and Dhillon]{yu2020pecos}
Hsiang-Fu Yu, Kai Zhong, Jiong Zhang, Wei-Cheng Chang, and Inderjit~S Dhillon.
\newblock Pecos: Prediction for enormous and correlated output spaces.
\newblock \emph{Journal of Machine Learning Research}, 2022.

\bibitem[Zaheer et~al.(2016)Zaheer, Wick, Tristan, Smola, and
  Steele]{zaheer2016exponential}
Manzil Zaheer, Michael Wick, Jean-Baptiste Tristan, Alex Smola, and Guy Steele.
\newblock Exponential stochastic cellular automata for massively parallel
  inference.
\newblock In \emph{Artificial Intelligence and Statistics (AISTATS)}, 2016.

\bibitem[Zaheer et~al.(2017)Zaheer, Kottur, Ahmed, Moura, and
  Smola]{zaheer2017canopy}
Manzil Zaheer, Satwik Kottur, Amr Ahmed, Jos{\'e} Moura, and Alex Smola.
\newblock Canopy fast sampling with cover trees.
\newblock In \emph{International Conference on Machine Learning (ICML)}, 2017.

\bibitem[Zaheer et~al.(2019)Zaheer, Guruganesh, Levin, and Smola]{zaheer2019sg}
Manzil Zaheer, Guru Guruganesh, Golan Levin, and Alex Smola.
\newblock Terrapattern: A nearest neighbor search service.
\newblock \emph{Pre-print}, 2019.

\bibitem[Zhang et~al.(2020)Zhang, Cao, Yan, Madden, and
  Rundensteiner]{zhang2020continuously}
Huayi Zhang, Lei Cao, Yizhou Yan, Samuel Madden, and Elke~A. Rundensteiner.
\newblock Continuously adaptive similarity search.
\newblock In \emph{Proceedings of the ACM SIGMOD International Conference on
  Management of Data (SIGMOD)}, 2020.

\end{thebibliography}
\bibliographystyle{plainnat}

\appendix
\onecolumn
\begin{centering}
\Large{\textbf{Supplemental Material: Improving Dual-Encoder Training through Dynamic Indexes for Negative Mining}}
\end{centering}
\section{EFFICIENT AND ACCURATE SAMPLES FROM THE SOFTMAX DISTRIBUTION}

\subsection{Rejection Sampling}
\label{sec:rejection_sampling}

Rejection sampling for the softmax distribution follows closely the proof in \cite{zaheer2017canopy} for mixture models. 

We sample from: 
\begin{align*}
    \lbl \sim \frac{\exp(\invtemp\ip{\dataenc(\datapoint)}{\lblenc(\hat{\lbl})})}{\sum_{\lbl'}\exp(\invtemp\ip{\dataenc(\datapoint)}{\lblenc(\hat{\lbl'})})}
\end{align*}
and accept with probability
\begin{align*}
  \expNegRejError \frac{\exp(\invtemp\ip{\dataenc(\datapoint)}{\lblenc(\lbl)})}{\exp(\invtemp\ip{\dataenc(\datapoint)}{\lblenc(\hat{\lbl})})}.
\end{align*}

If we have
\begin{align}
     \expNegRejError \leq \frac{\exp(\invtemp\ip{\dataenc(\datapoint)}{\lblenc(\hat{\lbl})})}{\exp(\invtemp\ip{\dataenc(\datapoint)}{\lblenc(\lbl)})} \leq \expRejError.
 \end{align}
Then, if we want determine the probability of sampling a particular target, $\lbl$, denoted $\textsf{Pr}(\lbl)$. Producing $\lbl$ can be done by \textcolor{teal}{ sampling and accepting $\lbl$ or} and \textcolor{purple}{sampling and rejecting another target $\lbl'$ and then accepting $\lbl$ in one of the subsequent rounds of sampling}.
\newcommand{\acceptY}{{\color{teal} \frac{\exp(\invtemp\ip{\dataenc(\datapoint)}{\lblenc(\hat{\lbl})})}{\sum_{\lbl''}\exp(\invtemp\ip{\dataenc(\datapoint)}{\lblenc(\hat{\lbl''})})} \expNegRejError \frac{\exp(\invtemp\ip{\dataenc(\datapoint)}{\lblenc(\lbl)})}{\exp(\invtemp\ip{\dataenc(\datapoint)}{\lblenc(\hat{\lbl})})} }}
\newcommand{\rejectOther}{\color{purple} \textsf{Pr}(\lbl) \sum_{y' \in \labelset} \left ( 1 - \expNegRejError \frac{\exp(\invtemp\ip{\dataenc(\datapoint)}{\lblenc(\lbl')})}{\exp(\invtemp\ip{\dataenc(\datapoint)}{\lblenc(\hat{\lbl'})})} \right ) 
\frac{\exp(\invtemp\ip{\dataenc(\datapoint)}{\lblenc(\hat{\lbl'})})}{\sum_{\lbl''}\exp(\invtemp\ip{\dataenc(\datapoint)}{\lblenc(\hat{\lbl''})})}
}
\begin{equation}
\resizebox{.99\linewidth}{!}{%
$\begin{aligned}
    \textsf{Pr}(\lbl) &= \acceptY + \rejectOther \\
    &= \expNegRejError \frac{ \exp(\invtemp\ip{\dataenc(\datapoint)}{\lblenc({\lbl})})}{\sum_{\lbl''}\exp(\invtemp\ip{\dataenc(\datapoint)}{\lblenc(\hat{\lbl''})})} + \frac{\textsf{Pr}(\lbl)}{\sum_{\lbl''}\exp(\invtemp\ip{\dataenc(\datapoint)}{\lblenc(\hat{\lbl''})})} \sum_{y' \in \labelset} \left ( 1 - \expNegRejError \frac{\exp(\invtemp\ip{\dataenc(\datapoint)}{\lblenc(\lbl')})}{\exp(\invtemp\ip{\dataenc(\datapoint)}{\lblenc(\hat{\lbl'})})} \right ) \exp(\invtemp\ip{\dataenc(\datapoint)}{\lblenc(\hat{\lbl'})}) \\
    &= \expNegRejError \frac{ \exp(\invtemp\ip{\dataenc(\datapoint)}{\lblenc({\lbl})})}{\sum_{\lbl''}\exp(\invtemp\ip{\dataenc(\datapoint)}{\lblenc(\hat{\lbl''})})} + \frac{\textsf{Pr}(\lbl)}{\sum_{\lbl''}\exp(\invtemp\ip{\dataenc(\datapoint)}{\lblenc(\hat{\lbl''})})} \sum_{y' \in \labelset} \left ( \exp(\invtemp\ip{\dataenc(\datapoint)}{\lblenc(\hat{\lbl'})}) - \expNegRejError {\exp(\invtemp\ip{\dataenc(\datapoint)}{\lblenc(\lbl')})} \right )  \\
    &= \expNegRejError \frac{ \exp(\invtemp\ip{\dataenc(\datapoint)}{\lblenc({\lbl})})}{\sum_{\lbl''}\exp(\invtemp\ip{\dataenc(\datapoint)}{\lblenc(\hat{\lbl''})})} + \frac{\textsf{Pr}(\lbl)}{\sum_{\lbl''}\exp(\invtemp\ip{\dataenc(\datapoint)}{\lblenc(\hat{\lbl''})})} \left ( \sum_{y' \in \labelset}  \exp(\invtemp\ip{\dataenc(\datapoint)}{\lblenc(\hat{\lbl'})}) - \expNegRejError \sum_{y' \in \labelset}  {\exp(\invtemp\ip{\dataenc(\datapoint)}{\lblenc(\lbl')})} \right )  \\
    &= \expNegRejError \frac{ \exp(\invtemp\ip{\dataenc(\datapoint)}{\lblenc({\lbl})})}{\sum_{\lbl''}\exp(\invtemp\ip{\dataenc(\datapoint)}{\lblenc(\hat{\lbl''})})} + \textsf{Pr}(\lbl) - \frac{\textsf{Pr}(\lbl)}{\sum_{\lbl''}\exp(\invtemp\ip{\dataenc(\datapoint)}{\lblenc(\hat{\lbl''})})} \expNegRejError Z  \\
    \textsf{Pr}(\lbl) &= \frac{1}{Z} \exp(\invtemp\ip{\dataenc(\datapoint)}{\lblenc({\lbl})})
\end{aligned}$}
\end{equation} 
And so the rejection sampling strategy will sample from the true softmax distribution.

\subsection{Metropolis-Hastings Approximate Softmax}
\label{sec:mh_appendix}

A common approach for providing samples from a distribution that is difficult to sample from is the Metropolis-Hastings algorithm. 
Recall that Metropolis-Hastings produces a sample from $P(\lbl|\datapoint)$ by iteratively sampling a state change from a proposal distribution, denoted $\approxsoftmax$, and determines whether or not to `accept' the state change based on an \emph{acceptance ratio}.
In particular, we will use an \emph{independent} Metropolis-Hastings method. That is the proposal distribution $\approxsoftmax$ is independent of the current state of the Markov chain. For the $t$-state in the chain, we have a proposal distribution of the form $\lbl^{(t)} \sim \approxsoftmax(\lbl|\datapoint,\lbl^{(t-1)}) = \approxsoftmax(\lbl|\datapoint)$. 
The acceptance ratio is defined as:
\begin{equation}
    \acceptanceratio(\lbl^{(t-1)}, \lbl^{(t)}) = {\min} \left (1, \frac{P(\lbl^{(t)}|\datapoint)}{P(\lbl^{(t-1)}|\datapoint)}
    \frac{\approxsoftmax(\lbl^{(t-1)}|\datapoint)}{\approxsoftmax(\lbl^{(t)}|\datapoint)} \right )
\end{equation} 

\subsection{Proofs for \S\ref{sec:search}: Hierarchical Clustering Structures}

\unnormalizedError*

\begin{proof}

\begin{align}
   \frac{\exp(\beta \ip{\dataenc(\datapoint)}{\lblenc(\lbl^{(\partition)}})}{\exp(\beta\ip{\dataenc(\datapoint)}{\lblenc(\lbl)})} 
    &\leq \exp(\beta \ip{\dataenc(\datapoint)}{\lblenc(\lbl^{(\partition)}) - \lblenc(\lbl)}) \\ 
     &\leq \exp(\beta \norm{\dataenc(\datapoint)}_2 \norm{\lblenc(\lbl^{(\partition)}) - \lblenc(\lbl)}_2) &\text{\mycomment{Cauchy-Schwartz}}\\  \\ 
     &\leq \exp(\beta \norm{\lblenc(\lbl^{(\partition)}) - \lblenc(\lbl)}_2) &\text{\mycomment{Assumption~\ref{assumption:unitNorm}}}\\ 
     &\leq \exp(\beta \ctbase^\ell) &\text{\mycomment{Covering Property}}. 
\end{align}

\end{proof}

\normalizedError*
\begin{proof}

Let $\ell$ be the level of the clustering $\partition$ used.

\begin{align}
    \max_{\lbl \in \labelset} \frac{P(y|x)}{Q(y|x;\partition_{(\ell)})} &= \max_{\lbl \in \labelset}  \frac{\exp(\beta \ip{\dataenc(\datapoint)}{\lblenc(\lbl})}{\exp(\beta\ip{\dataenc(\datapoint)}{\lblenc(\lbl^{(\partition_{(\ell)})})}} \cdot \frac{\hat{Z}}{Z} \\
    & \leq  \max_{\lbl \in \labelset} \frac{\exp(\beta \ip{\dataenc(\datapoint)}{\lblenc(\lbl})}{\exp(\beta\ip{\dataenc(\datapoint)}{\lblenc(\lbl^{(\partition_{(\ell)})})}} \cdot \max_{\lbl' \in \labelset}  \frac{\exp(\beta \ip{\dataenc(\datapoint)}{\lblenc(\lbl'^{(\partition_{(\ell)})})})}{\exp(\beta \ip{\dataenc(\datapoint)}{\lblenc(\lbl'})} 
\end{align}
The above inequality follows by property of mediant. Now define:
\begin{align}
    R_1 & \triangleq \max_{\lbl \in \labelset} 
    \frac{
        \exp(\beta \ip{\dataenc(\datapoint)}{\lblenc(\lbl)})
    }
    {
        \exp(\beta\ip{\dataenc(\datapoint)}{\lblenc(\lbl^{(\partition_{(\ell)})})})
    }  \\
    R_2 & \triangleq  \max_{\lbl' \in \labelset}  \frac{
        \exp(\beta\ip{\dataenc(\datapoint)}{\lblenc({\lbl'}^{(\partition_{(\ell)})})})
    }{
        \exp(\beta \ip{\dataenc(\datapoint)}{\lblenc(\lbl')}))
    }  .
\end{align}

Observe for $R_1$:
\begin{align}
    \max_{\lbl \in \labelset} 
    \frac{
        \exp(\beta \ip{\dataenc(\datapoint)}{\lblenc(\lbl)})
    }
    {
        \exp(\beta\ip{\dataenc(\datapoint)}{\lblenc(\lbl^{(\partition_{(\ell)})})})
    }   &=  
    \max_{\lbl \in \labelset} {
        \exp(\beta \ip{
            \dataenc(\datapoint)}
            {\lblenc(\lbl)} - \beta\ip{\dataenc(\datapoint)}{\lblenc(\lbl^{(\partition_{(\ell)})})})
    } \\
    &= \max_{\lbl \in \labelset} {\exp(\beta \ip{\dataenc(\datapoint)}{\lblenc(\lbl) -   \lblenc(\lbl^{(\partition_{(\ell)})})})} \\
    &\leq \max_{\lbl \in \labelset} \exp(\beta \norm{\dataenc(\datapoint)}_2 \norm{\lblenc(\lbl) -   \lblenc(\lbl^{(\partition_{(\ell)})})}_2) \\
    &\leq \max_{\lbl \in \labelset} \exp(\beta \norm{\lblenc(\lbl) -   \lblenc(\lbl^{(\partition_{(\ell)})})}_2)
    \leq \exp(\beta \ctbase^{\ell})
\end{align}
Similarly for $R_2$:
\begin{align}
    \max_{\lbl \in \labelset}  \frac{\exp(\beta \ip{\dataenc(\datapoint)}{\lblenc(\lbl^{(\partition_{(\ell)})}})}{\exp(\beta\ip{\dataenc(\datapoint)}{\lblenc(\lbl)}} 
    \leq \max_{\lbl \in \labelset} \exp(\beta \norm{\lblenc(\lbl^{(\partition_{(\ell)})}) -   \lblenc(\lbl)}_2)
    \leq \exp(\beta \ctbase^{\ell})
\end{align}

Combining above two:
\begin{align}
    \max_{\lbl \in \labelset} \frac{P(y|x)}{Q(y|x;\partition)} \leq R_1 \times R_2 \leq  \exp(\beta \ctbase^{\ell}) \exp(\beta \ctbase^{\ell}) = \exp(2\beta \ctbase^{\ell})
\end{align}

Thus, to maintain maximum ratio $\gamma$, we need to select $\ell$ such that:
\begin{align}
    \exp(2\beta \ctbase^{\ell}) & \leq \gamma \\
    \ctbase^\ell &\leq \frac{1}{2\beta} \log \gamma 
\end{align}
which is as stated in the proposition.
\end{proof}

\changeFlat*
\begin{proof}
Let,
\begin{align}
    \ctbase^\ell &\leq \frac{1}{2\beta} \log \gamma \\
    \ctbase^{\ell-1} &\leq \frac{1}{2\beta} \log {\gamma'},
\end{align}
then,
\begin{align}
    \frac{1}{b}\ctbase^\ell &\leq \frac{1}{b} \frac{1}{2\beta} \log \gamma \\
    \ctbase^{\ell-1} &\leq \frac{1}{2\beta} \log {\gamma}^{\frac{1}{b}},
\end{align}
and so $\gamma'\leq \gamma^{\frac{1}{b}}$.
\end{proof}

\rejectionSamplingThm*
\begin{proof}
We want to show two properties of the rejection sampling algorithm, its running time and its correctness (e.g., that it samples from the true posterior).

The proof follows very closely to the analogous algorithm for mixture models by \cite{zaheer2017canopy}.

First, we want to show that the running time is $\bigo(|\partition_{(\ell)}| + \textsf{BF} e^{\ctbase^\ell})$ where $\textsf{BF}$ is the branching factor for cover/SG trees ($\bigo(\alpha^4)$ and $\bigo(\alpha^3)$ respectively). We begin by considering the expected number of rejections from the first level $\ell$. Based on our definition of $Q$, the number of samples is upper bounded by $e^{\beta \ctbase^{\ell+2}}$, therefore the number of rejections is $e^{\beta \ctbase^{\ell+2}}-1$. If we consider the next level down, there would be $e^{\beta \ctbase^{\ell+1}}-1$ rejections, the level after that $e^{\beta \ctbase^{\ell}}-1$, $e^{\beta \ctbase^{\ell-1}}-1$, etc. We can use the branching factor $\textsf{BF}$ to give a bound on how expensive the sampling step is at every level. We are interested therefore in:
\begin{align}
   \textsf{BF} \sum_{k=1}^\infty \left (e^{\beta \ctbase^{\ell-k}}-1 \right ).
\end{align}
We have that $e^x -1 \leq xe^a$ for $x\in[0,a]$ and $\sum_{k=1}^\infty \ctbase^{-k} = 1$ and so: 
\begin{align}
   \textsf{BF} \sum_{k=1}^\infty \left (e^{\beta \ctbase^{\ell-k}}-1 \right ) \leq \textsf{BF} \cdot e^{\beta \ctbase^{\ell}} \sum_{k=1}^\infty \ctbase^{-k} = \textsf{BF} \cdot e^{\beta\ctbase^{\ell}}
\end{align}
We then need to consider the cost of the initial sampling step, which is not $\textsf{BF}$, but rather depends on the number of clusters in the partition $|\partition_{(\ell)}|$, leading to $\bigo(|\partition_{(\ell)}| + \textsf{BF}\cdot e^{\beta \ctbase^\ell})$. It is important to note here that while the running time of the algorithm depends only on the branching factor of the trees, the depth of the SG tree differs from that of the cover tree. The depth of the SG tree is $\bigo(\log(\frac{d_\textrm{max}}{d_\textrm{min}}))$ where $d_\textrm{max}$ is the largest pairwise distance and $d_\textrm{min}$ the minimum pairwise distance (this ratio also know as the aspect ratio), whereas the depth of the cover tree is $\bigo(\alpha^2\log |\labelset|)$. 

Next, let's consider the correctness of the method. We want to determine the probability of sampling a particular target $\lbl$, denoted $\textsf{Pr}(y)$. Recall that to sample $y$, we need to follow the path in the tree from level $\ell$ to the level in which $y$ first appears as a cluster representative and accept that cluster. We will refer to this level in which $y$ first appears as $k$. There is a path of selected clusters/nodes $\cluster_\ell,\cluster_{\ell-1},\dots,\cluster_k,$, where we indicate the level of the selected node in the subscript. Let $\mathcal{A}(\lbl)$ be the probability of accepting $\lbl$. To reduce the complexity of notation, define: 
\begin{align}
    \textsf{w}_{\cluster} = \exp(\invtemp\ip{\dataenc(\datapoint)}{\lblenc(\cluster)})
\end{align}
We can write $\mathcal{A}(\lbl)$ as:
\begin{align}
    \mathcal{A}(\lbl) &= \frac{1}{{Z}_\ell} e^{\beta \ctbase^\ell} \blacktriangle_{\cluster_\ell} \textsf{w}_{\cluster_\ell} &\text{\mycomment{Sample at top level}} \\
    & \times \left [ \prod_{j=k+1}^\ell \left (1-\frac{\frac{1}{|\labelset|} \textsf{w}_{\cluster_j}}{e^{\beta\ctbase^j} \blacktriangle_{\cluster_j} \textsf{w}_{\cluster_j}} \right ) \frac{e^{\beta\ctbase^{j-1}}\blacktriangle_{\cluster_{j-1}} \textsf{w}_{\cluster_{j-1}}}{e^{\beta\ctbase^j} \blacktriangle_{\cluster_{j}} \textsf{w}_{\cluster_j} - \frac{1}{|\labelset|} \textsf{w}_{\cluster_j} } \right ] &\text{\mycomment{Rejecting and selecting path to $\cluster_k$}} \\
    &\times \frac{\frac{1}{|\labelset|}\textsf{w}_{\cluster_k}}{e^{\beta\ctbase^k \blacktriangle_{\cluster_{k}} \textsf{w}_{\cluster_k}}} &\text{\mycomment{Accepting the given cluster $\cluster_k$}} \\
    &= \frac{1}{{Z}_\ell} e^{\beta \ctbase^\ell} \blacktriangle_{\cluster_\ell} \textsf{w}_{\cluster_\ell} \left [\prod_{j=k+1}^\ell \frac{e^{\beta\ctbase^{j-1}} \blacktriangle_{\cluster_{j-1}} \textsf{w}_{\cluster_{j-1}}}{e^{\beta\ctbase^{j}} \blacktriangle_{\cluster_{j}} \textsf{w}_{\cluster_{j}}} \right ] \frac{\frac{1}{|\labelset|} \textsf{w}_{\cluster_{k}}}{e^{\beta\ctbase^{k}} \blacktriangle_{\cluster_{k}} \textsf{w}_{\cluster_{k}}} \\
    &= \frac{1}{{Z}_\ell} e^{\beta \ctbase^\ell} \blacktriangle_{\cluster_\ell} \textsf{w}_{\cluster_\ell} \left [ \frac{e^{\beta\ctbase^{k}} \blacktriangle_{\cluster_{k}} \textsf{w}_{\cluster_{k}}}{e^{\beta\ctbase^{\ell}} \blacktriangle_{\cluster_{\ell}} \textsf{w}_{\cluster_{\ell}}} \right ] \frac{\frac{1}{|\labelset|} \textsf{w}_{\cluster_{k}}}{e^{\beta\ctbase^{k}} \blacktriangle_{\cluster_{k}} \textsf{w}_{\cluster_{k}}} \\
    &= \frac{1}{{Z}_\ell} \frac{1}{|\labelset|} \textsf{w}_{\cluster_{k}}
\end{align}
Now, we have to consider the probability $\mathcal{R}$ of restarting the sampler, e.g., 
\begin{align}
    \mathcal{R} = 1-\sum_{\lbl'}\mathcal{A}(\lbl') = 1 - \sum_{\lbl'} \frac{1}{{Z}_\ell} \frac{1}{|\labelset|} \textsf{w}_{\cluster_{k}} = 1 - \frac{1}{|\labelset|}\frac{Z}{Z_\ell}
\end{align}
Finally, consider $\textsf{Pr}(\lbl)$:
\begin{align}
    \textsf{Pr}(\lbl) &= \mathcal{A}(\lbl) +  \textsf{Pr}(\lbl) \cdot \mathcal{R} \\
    &= \frac{1}{{Z}_\ell} \frac{1}{|\labelset|} \textsf{w}_{\lbl} + \textsf{Pr}(\lbl)  - \textsf{Pr}(\lbl)  \frac{1}{|\labelset|}\frac{Z}{Z_\ell} \\
    \textsf{Pr}(\lbl) &= \frac{1}{Z}  \textsf{w}_{\lbl}
\end{align}
Therefore, the algorithm samples from the true softmax.
\end{proof}

\searchCorrectness*
\begin{proof}
Recall that Proposition~\ref{proposition:normalizedError} ensures that the initial partition achieves $\max_{\lbl \in \labelset} \frac{P(\lbl|\datapoint)}{Q(\lbl|\datapoint;\partition_{(\ell)})} = \gamma$. For any cluster in the partition discovered by the algorithm, $\cluster \in \partition$, let $k < \ell$ be level of the cluster. We therefore have: 
\begin{align}
    \max_{\lbl \in \cluster} \frac{P(\lbl|\datapoint)}{Q(\lbl|\datapoint;\partition)} \leq \exp(\beta \ctbase^k) \leq \exp(\beta \ctbase^\ell).  
\end{align}

The covering property and triangle inequality ensures that if $\norm{\dataenc(\datapoint) - \lblenc(\cluster)}_2 > \ctbase^k + \ctbase^{m}$, then every target in $\cluster$ at least $\ctbase^m$ from ${\dataenc(\datapoint)}$.

\end{proof}

\subsection{Proofs for \S\ref{sec:covertree_estimaor}: Gradient-Bias of Our Estimator}

\gradientBias*
\begin{proof}

Observe that:
\begin{align}
    \nabla_\Theta {\loss}(\datapoint_i,\lbl_i) &= -\invtemp \nabla_\Theta \ip{\dataenc(\datapoint_i)}{\lblenc(\lbl_i)} + \invtemp \E_{P}[\nabla_\Theta \ip{\dataenc(\datapoint_i)}{\lblenc(\lbl)}] \\ 
    \E[\nabla_\Theta {\hat{\loss}}(\datapoint_i,\lbl_i)] &= -\invtemp \nabla_\Theta \ip{\dataenc(\datapoint_i)}{\lblenc(\lbl_i)} + \invtemp \E_{Q_\textsf{MH}}[\nabla_\Theta \ip{\dataenc(\datapoint_i)}{\lblenc(\lbl)}] \\
    \E[\nabla_\Theta {\hat{\loss}}(\datapoint_i,\lbl_i)] - \nabla_\Theta {{\loss}}(\datapoint_i,\lbl_i) &= \beta \E_{Q_\textsf{MH}}[\nabla_\Theta \ip{\dataenc(\datapoint_i)}{\lblenc(\lbl)}] - \beta \E_{P}[\nabla_\Theta \ip{\dataenc(\datapoint_i)}{\lblenc(\lbl)}] \\
    \E[\nabla_\Theta {\hat{\loss}}(\datapoint_i,\lbl_i)] - \nabla_\Theta {{\loss}}(\datapoint_i,\lbl_i) &=\invtemp \sum_{y} \nabla_\Theta \ip{\dataenc(\datapoint_i)}{\lblenc(\lbl)} (P(y|x) - Q_\textsf{MH}(y|x)) 
\end{align}

Now using the bound on the total variation:
\begin{align}
    \norm{\E[\nabla_\Theta {\hat{\loss}}(\datapoint_i,\lbl_i)] - \nabla_\Theta {{\loss}}(\datapoint_i,\lbl_i)} &= \bignorm{\invtemp \sum_{y} \nabla_\Theta \ip{\dataenc(\datapoint_i)}{\lblenc(\lbl)} (P(y|x) - Q_\textsf{MH}(y|x))} \\ 
    &= \invtemp \sum_{y}\norm{ \nabla_\Theta \ip{\dataenc(\datapoint_i)}{\lblenc(\lbl)}} \cdot |(P(y|x) - Q_\textsf{MH}(y|x)| \\ 
    &\leq \invtemp \sum_{y} M |(P(y|x) - Q_\textsf{MH}(y|x)| \\ 
    &= \invtemp M \|P - Q_\textsf{MH}\|_1 \\ 
    &= 2\invtemp M \|P - Q_\textsf{MH}\|_\text{TV} \\
    &\leq 2 \invtemp M \epsilon 
\end{align}
\end{proof}

\subsection{Proofs for \S\ref{sec:dynamic}: Dynamic Maintenance of the Tree Structure}

\textbf{Proposition} \ref{proposition:pairwiseAmountOfChange} \emph{Under Assumptions~\ref{assumption:lipchitz},\ref{assumption:boundedGradient},\ref{assumption:unitNorm},
let $\phi_t$ and $\phi_{t+w}$ refer to the model parameters and model parameters after $w$ more steps of gradient descent with learning rate $\eta$.}
\begin{equation}
   \left | \norm{\lblencPTstep{t} - f_{\phi_{t}}(\lbl')}_2 - \norm{\lblencPTstep{t+w} - f_{\phi_{t+w}}(\lbl')}_2 \right | \leq 4w\eta \beta LM.
    \label{eq:changedAmount}
\end{equation}

\begin{proof}
We analyze this with similar techniques and assumptions as \cite{lindgren2021efficient}.
First consider a bound on the gradient norm. Let $k$ by the number of negative samples where the negative samples are given by $N_s = \{y_{s_j} : j \in [k]\}$
\begin{align}
   \norm{\nabla_\Theta {\hat{\loss}}(\datapoint_i,\lbl_i)} &= \bignorm{-\invtemp \nabla_\Theta \ip{\dataenc(\datapoint_i)}{\lblenc(\lbl_i)} + \invtemp \frac{1}{k} \sum_{j=1}^{k} \nabla_\Theta \ip{\dataenc(\datapoint_i)}{\lblenc(\lbl_{y_{s_j}})}} \\ 
   &\leq \invtemp \norm{\nabla_\Theta \ip{\dataenc(\datapoint_i)}{\lblenc(\lbl_i)}} + \invtemp \frac{1}{k} \bignorm{ \sum_{j=1}^{k} \nabla_\Theta \ip{\dataenc(\datapoint_i)}{\lblenc(\lbl_{y_{s_j}})}} \\
   &\leq \invtemp \norm{\nabla_\Theta \ip{\dataenc(\datapoint_i)}{\lblenc(\lbl_i)}} + \invtemp \frac{1}{k}  \sum_{j=1}^{k}\bignorm{ \nabla_\Theta \ip{\dataenc(\datapoint_i)}{\lblenc(\lbl_{y_{s_j}})}} \\
   &\leq 2\invtemp  M
\end{align}

Now, consider the difference of the dual encoder parameters at timestep, $t$, $\allparameters_{t}$ and at timestep, $t+w$, $\allparameters_{t+w}$.
\begin{align}
    \allparameters_{t} - \allparameters_{t+w} &= \sum_{i=t}^{t+w} \eta \nabla_\allparameters \hat{\loss}(x_i,y_i) \\
    \norm{\allparameters_{t} - \allparameters_{t+w}} &= \bignorm{\sum_{i=t}^{t+w} \eta \nabla_\allparameters \hat{\loss}(x_i,y_i)} \\
     &\leq \sum_{i=t}^{t+w} \eta \bignorm{\nabla_\allparameters \hat{\loss}(x_i,y_i)} \\
    &= w \eta 2 \invtemp M
\end{align}

Applying the triangle inequality
\begin{align}
    &\left | \norm{\lblencPTstep{t} - f_{\phi_{t}}(y')} - \norm{\lblencPTstep{t+w} - f_{\phi_{t+w}}(y')} \right | \\
    &\leq \left | \norm{\lblencPTstep{t} - f_{\phi_{t+w}}(y)} +  \norm{\lblencPTstep{t+w} - f_{\phi_{t}}(y')} - \norm{\lblencPTstep{t+w} - f_{\phi_{t+w}}(y')} \right | \\
    &\leq \left | \norm{\lblencPTstep{t} - \lblencPTstep{t+w}} +  \norm{f_{\phi_{t+w}}(y') - f_{\phi_{t}}(y')} +  \norm{\lblencPTstep{t+w} - f_{\phi_{t+w}}(y')} - \norm{\lblencPTstep{t+w} - f_{\phi_{t+w}}(y')} \right | \\
    &= \left | \norm{\lblencPTstep{t} - \lblencPTstep{t+w}} +  \norm{f_{\phi_{t}}(y') - f_{\phi_{t+w}}(y')} \right |  
\end{align}
Recall that the dual encoders satifies the Lipschitz assumption (Assumption~\ref{assumption:lipchitz}):
\begin{align}
      \norm{\lblencPTstep{t} - \lblencPTstep{t+w}}  &\leq  L\norm{\Theta-\Theta'} \leq w \eta 2 \invtemp L M \ \ \ \forall y
\end{align}

And so:
\begin{align}
& \left | \norm{\lblencPTstep{t} - f_{\phi_{t}}(y')} + \norm{\lblencPTstep{t+w} - f_{\phi_{t+w}}(y')} \right | \leq 4w\eta \invtemp LM. 
\end{align}
\end{proof}

\section{EFFICIENT RE-ENCODING OF TARGETS}
\label{appendix:LowDim}

In this section, we describe in more detail
the re-encoding part of \ours. 
Let's recap the structure of the re-encoding model.
Recall that we keep a cached, low-dimensional version of each encoded target. We denote this cache as  $Y_t'\subset \mathbb{R}^{d'}$ where $d'$ is the number of landmarks used in \nystrom\ / the reduced dimensionality and $t$ represents the number of gradient steps. Note that while we denote separate caches for each time step, we need not physically store multiple such caches. The cache from the previous time step can be overwritten during the update. 

We refer to $Y_t$ as the set of target embeddings in their full dimensional space. This set $Y_t$ is never fully instantiated. 
We have a set of $d'$ \emph{landmark} points, which are kept fixed during training and which were sampled uniformly at random. We will store these landmarks in their full dimensional form. We denote the set of landmarks as $\mathscr{S}$. We also have a set of training points used to fit the regression model, $\mathscr{R}$, which maps target embeddings in $Y_t$ to form $Y_{t+w}$. Each time we fit $\mathscr{R}$, we sample $s'$ targets uniformly at random and build a training dataset for the regressor using the updated dual encoder, $f_{\phi_{t+w}}$, specifically, $(\vec{y_1},\lblencPTstepNoLbl{t+w}(y_1)),\dots(\vec{y_{s'}},\lblencPTstepNoLbl{t+w}(y_{s'}))$. We only need to store the regression training dataset points and the landmarks in their full dimension form. For all other targets, we can compute their approximate full dimensional representation using $\mathscr{R}$ and then immediately project into lower dimensional space $d'$ using the landmarks/\nystrom. We define the regression model $\mathscr{R}$ to be a low-rank, \nystrom, regression model with the same landmarks $\mathscr{S}$. Therefore, the input to the regression model can be the low-dimensional representations. 

In summary, the approximate re-encoding procedure would,  re-encode $s'$ points using the new encoder model, $\lblencPTstepNoLbl{t+w}$, build a training dataset for the regression model $\mathscr{R}$, fit the regression model $\mathscr{R}$, use $\mathscr{R}$ to update the landmark point representation $\mathscr{S}$, use  $\mathscr{R}$ and $\mathscr{S}$ to build the low-dimensional embeddings for all targets in $Y_t'$ as well as the corresponding $(\mathbf{S}^T\mathbf{K}\mathbf{S})^{-1}$ projection matrix. This is summarized in Algorithm~\ref{alg:reEncode} (with initialization in Algorithm~\ref{alg:initialEncoding}), which is called as a subroutine of the overall training algorithm (Algorithm~\ref{alg:dynnibalComplete}).

It is important to note that we do not use these approximate
re-encoded targets at evaluation time. We only use these 
dimensionality reduced targets at training time. 

Lastly, we note that while we sample the landmark points from only the targets empirically, one could (in a more principled way) sample from all datapoints and targets.

\subsection{Building SG Trees with \nystrom Representations}
\label{sec:nystromSG}

When we use our \nystrom-based lower dimensional representations, the unit norm assumptions about the targets no longer apply. At query time, for a given point $x$, we can think about it as corresponding to some row $i$ in the $\mathbf{KS}$ matrix. Similarly each target $y$ to some column $j$ in $(\mathbf{S}^T\mathbf{K}\mathbf{S})^{-1}\mathbf{S}^T\mathbf{K}$. This corresponds to using the $(\mathbf{S}^T\mathbf{K}\mathbf{S})^{-1}\mathbf{S}^T\mathbf{K}$ representations as the target cluster representatives in the tree structure. However, when building (or re-building) the tree structure, we need to be able to measure the similarity between two targets, i.e., using both $\mathbf{KS}$ and $(\mathbf{S}^T\mathbf{K}\mathbf{S})^{-1}\mathbf{S}^T\mathbf{K}$. We only need to use the $\mathbf{KS}$ representations during construction / re-building time however and if memory is a concern, we could only store a single representation and use the $(\mathbf{S}^T\mathbf{K}\mathbf{S})^{-1}$ when measuring (dis)similarity. 

Finally, we need to be able to use the SG Tree despite using inner product similarities (rather than a proper distance measure). Rather than the approach presented by \cite{zaheer2019sg}, we find that converting to distance by $\exp(-\mathbf{KS}(\mathbf{S}^T\mathbf{K}\mathbf{S})^{-1}\mathbf{S}^T\mathbf{K})$ is more effective. We find that setting the base of the SG tree to be 1.05 seems to work well with this conversion of similarities to distances. Note that these modifications mean that even running the exact nearest neighbor search algorithm may result in approximations. However, we find empirically that this structure is sufficient for training dual encoders.

\begin{algorithm}
\caption{Approximate Re-Encode}
\begin{algorithmic}[1]
\STATE Let $\mathscr{S}$ refer to the set of landmark points 
\STATE Sample $s'$ targets uniformly at random from the collection of targets to form $S'_t$.
\STATE Let $Y'_{S'_t}$ be the low dimensional representations of the training points. 
\STATE Build regression training dataset  $(\vec{y_1},\lblencPTstepNoLbl{t+w}(y_1)),\dots(\vec{y_{s'}},\lblencPTstepNoLbl{t+w}(y_{s'}))$ using the updated encoder model. 
\STATE Train $\mathscr{R}$ on $(\vec{y_1},\lblencPTstepNoLbl{t+w}(y_1)),\dots(\vec{y_{s'}},\lblencPTstepNoLbl{t+w}(y_{s'}))$ \mycomment{$\mathscr{R}$ is a low-rank regression model using the same landmarks $\mathscr{S}$.}
\STATE Update the embedding of landmark representatives $\mathscr{S}$ using $\mathscr{R}$. Update \nystrom projection matrix $(\mathbf{S}^T\mathbf{K}\mathbf{S})^{-1}$
\STATE Produce new low-dimensional embeddings for all targets $Y'_{t+w}$ using $\mathscr{R}$.
\end{algorithmic}
\label{alg:reEncode}
\end{algorithm}

\begin{algorithm}
\caption{Initial Target Encoding}
\begin{algorithmic}[1]
\STATE Sample landmarks $\mathscr{S}$ uniformly at random
\STATE Encode landmarks, fit \nystrom.
\STATE Encode all of the targets with the encoder model $\lblencPTstepNoLbl{}$ and use \nystrom to produce low dimensional representations $Y'_{0}$
\end{algorithmic}
\label{alg:initialEncoding}
\end{algorithm} 
\section{ADDITIONAL EMPIRICAL DETAILS}
\label{app:empirical_details}

The dual encoder models used in all experiments are transformers, initialized with the Roberta base model \citep{liu2019roberta}. We use the hyperparameters presented in Table~\ref{tab:hyperparameters}. We note that the changes unique for Zeshel were done to account for the longer data point input length.

\begin{table}[]
    \centering
    \begin{tabular}{l|l}
    \toprule
    \bf Name & \bf Value  \\
    \midrule
    Train Batch Size & 128 total\\
    Uniform Negatives & 64 per training example (32 for Zeshel)  \\
    Sampled Negatives & 64 per training example \\
    Initial Warmup Learning Rate & 1e-5 \\
    End Warmup Learning Rate & 1e-7 \\
    Warmup Steps & 10,000 \\
    Warmup Decay Type & Polynomial \\
    Optimizer & Adam \\
    Optimizer Beta1 & 0.9 \\
    Optimizer Beta2 & 0.999 \\
    Optimizer Epsilon & 1e-8 \\
    Score Scaling & 20.0 \\
    Max Data Point Feature Length & 128 (256 for Zeshel) \\
    Max Target Feature Length & 256 \\
    Encoder Embedding Dimension & 768 \\
    \bottomrule
    \end{tabular}
    \caption{Hyperparameters used for training dual encoder models.}
    \label{tab:hyperparameters}
\end{table}

All models unit-norm the output embedding that is produced by the dual encoder. All models perform score-scaling as in \cite{lindgren2021efficient} with a scaling factor of 20.0. All models share sampled negative examples across all positives in the same batch. In-batch negatives are used by all models. Stochastic Negative mining and \ours use uniform negatives in addition to the sampled hard negatives.

We first train models using uniform negatives and then apply \ours. Note that this mimics the hypothesis that the softmax distribution would be closer to uniform in the beginning of training. We perform 4000 steps of uniform negatives for NQ and 8000 MSMARCO. 

We use 128 landmarks in \nystrom/regression model for all datasets. For Natural Questions and MSMARCO we use 8192 training examples for the regression model. For Zeshel we use 256 training examples for the regression model. For NQ, we perform a full refresh of all embeddings (re-initializing \nystrom/regression model) once during training at step 7500.

We find that changing the number of landmarks used does not dramatically change the performance of the method. For instance, on Natural Questions (NQ), we find that in terms of R@1, 256 landmarks achieves 0.488
while with 128 landmarks gets 0.481 and 64 landmarks gets 0.479 and R@100 with 256 landmarks is 0.862 and 128
landmarks gets 0.859 and 64 landmarks gets 0.859.

\subsection{SG Tree Details}

We use the \nystrom-modified SG Tree that is described in \S\ref{sec:nystromSG}. We notice that the tree construction and sampling procedure can be done much more efficiently on smaller trees. And so, rather than constructing one tree over all the targets, we construct a forest of 100 trees of roughly equal size. We construct and build samples from this forest independently and aggregate samples (by taking the max unnormalized probability).

\section{ZESHEL EXPERIMENTAL RESULTS}
\label{app:zeshel}

\noindent \textbf{Zeshel} \citep{logeswaran2019zero} is a dataset for classifying (linking) ambiguous mentions of entities to their unambiguous entity pages in Fandom Wikias. The original dataset separates individual Fandom Wikia into separate domains, which severely limits the number of targets. To make the task more challenging and better suited for evaluation of models approximating the softmax loss, we combine all Wikias together, resulting in a collection of 492K targets. We evaluate in terms of recall of the correct entity.

Tables \ref{tab:zeshel_dev_table} and \ref{tab:zeshel_test_table} report the dev and test scores for the Zeshel dataset. Recall that the number of targets in this dataset is significantly less than the other two datasets ($\sim$500K compared to 21M (Natural Questions), 8.8M (MSMARCO)). We find that the performance of \ours improves upon baselines of Stochastic Negative Sampling and Uniform Negatives. 

Because of the reduced size of the dataset, we use only 256 training points for the regression model. We also use a relatively small number of examples for stochastic negative mining (32768 in total), though this accounts for a larger memory percentage overall because of the reduced number of targets.

\begin{table}[]
    \centering
    \begin{tabular}{@{}l c c c c @{}}
    \toprule
        Recall & Mem. \%  & R@1 & R@10 & R@100 \\
         \midrule
         In-batch Negatives & 0\% &  0.388 & 0.726 & 0.861 \\
         Uniform Negatives& 0\% &  0.393 & 0.713 & 0.851 \\
         Stochastic Neg & 6.59\% & 0.413 & 0.737 & 0.865 \\
         \ours & 0.064\% &  0.421  & 0.742 & 0.870  \\
         \midrule
         Exhaustive & 100\% &  0.444 & 0.756  & 0.880 \\
    \bottomrule
    \end{tabular}
    \caption{\textbf{Zeshel (Dev) Results}. We measure the recall performance on the entity linking dataset. We note that this dataset is considerably smaller than the other datasets in terms of number of targets. Still, we find that \ours can achieve better performance than baseline methods while approaching the performance of the brute force oracle. \ours achieves better performance that a Stochastic Negative Sampling baseline which uses more memory.}
    \label{tab:zeshel_dev_table}
\end{table}
\begin{table}[]
    \centering
    \begin{tabular}{@{}l c c c c @{}}
    \toprule
        Recall & Mem. \%  & R@1 & R@10 & R@100 \\
         \midrule
         In-batch Negatives & 0\% & 0.344 &0.627   &  0.778  \\
         Uniform Negatives& 0\% & 0.323  & 0.593  & 0.747  \\
         Stochastic Neg & 6.59\% & 0.330 & 0.609 & 0.756  \\
         \ours & 0.064\% &  0.348  & 0.623 & 0.766  \\
         \midrule
         Exhaustive & 100\% &  0.348 & 0.617  & 0.767 \\
    \bottomrule
    \end{tabular}
    \caption{\textbf{Zeshel (Test) Results}. We measure the recall performance on the entity linking dataset. While the dataset has considerably fewer targets, \ours achieves nearly the same result as the exhaustive brute force method and better results than Stochastic Negative Mining and Uniform Negatives. In this setting, In-Batch Negatives performs very well, perhaps because of the relatively small number of targets.}
    \label{tab:zeshel_test_table}
\end{table}

 \section{LIMITATIONS}

The focus of this work is on improving the approximation of the cross-entropy loss and the quality of the samples from the softmax distribution. For some tasks, it may be the case that different objectives, including multi-task, pretraining, contrastive, and others may lead to models that generalize better. After applying \nystrom, the approximate re-encoding step, and the sampling approximations, we lose theoretical properties. Future work could investigate both why these approximations work well and how to bound their approximation quality.

\section{ETHICAL CONSIDERATIONS}

The proposed approach is subject to the same biases and ethical considerations as the dual encoders used as the base model. The reduction in error cannot be assumed to reduce or exacerbate the potentially negative aspects of such an encoder model. The biases and considerations of any such retrieval/classification task need to be carefully considered as with any model. The implications of negative sampling via uniform vs. hard methods will likely impact the decision boundary of the model (as observed in our empirical experiments). Understanding how the training method relates to biases in the data, labels, targets, would be an important consideration.

\begin{table}[]
    \centering
    \begin{tabular}{@{}ll@{}}
    \toprule
    \bf Symbol & \bf Definition \\
    \midrule
        $\datapoint$ & A data point / the features of a data point, e.g. input to the data dual encoder. \\
        $\dataset$ & The set of data points \\
        $\lbl$ & A target / the features of a target, e.g. input to the target dual encoder. \\
        $\labelset$ & Set of all targets \\
        $\loss$ & Loss function, in particular cross entropy \\
        $\hat{\loss}$ & Our approximate, sampling based loss function \\
        L & Lipschitz value, Assumption~\ref{assumption:lipchitz} \\
        M & Bound on gradient of logits, Assumption~\ref{assumption:boundedGradient}  \\
        $\dataenc$ & The data point dual encoder \\
        $\lblenc$ & The target dual encoder \\
         $\Theta$ & The parameters of both the data and target dual encoders. \\
         \midrule
        $Z$ & The partition function for the softmax.    Eq~\ref{eq:softmax}. \\
        $\hat{Z}$ & The approximated partition function using the clustering-based approach. Eq~\ref{eq:approxclustersoftmax}. \\
        $Z_k$ & The normalizer constant when descending to level $k$ in the rejection sampling approach. \\
        $P$, $P(y|x)$ & True, exact softmax distribution \\
        $\approxsoftmax$ & The proposal distribution for Metropolis-Hastings. Eq~\ref{eq:approxclustersoftmax}. \\
        $\approxsoftmax_\textsf{MH}$ & The distribution over labels given by Metropolis Hastings sampling procedure. See Appendix~\ref{sec:mh_appendix}.\\
        $\expNegRejError$ & Used for unnormalized probability error term in rejection sampling \\
        $d$ & Dimensionality of dual encoder output. \\
         \midrule
        $\partition$ & A clustering of the targets. \\
        $\lbl^{(\partition)}$ & The cluster assignment of target $\lbl$. Overloaded to also be the cluster's representative (or its features).  \\
        $\cluster$ & A cluster of targets, e.g. $\cluster \subseteq \labelset$. \\
        $\lblenc(\lbl^{(\partition)})$,$\lblenc(\cluster)$ & The encoded representation of the cluster's representative. \\
        $\hc$ & A hierarchical clustering / cover or SG Tree of the targets \\
        $\ctbase$ & The base parameter of the cover / SG tree \\
        $\children{(\cluster)}$ & The children of a node in the hierarchical clustering/cover tree. \\
        $\partition_{(\ell)}$ & The partition associated with level $\ell$ of the cover tree. \\
        $Y_{(\ell)}$ & The set of cluster representatives for the partition associated with level $\ell$ of the cover tree. \\
        $\alpha$ & The expansion constant used in our theoretical analysis~\ref{def:expansionConstant}\\
        $\textsc{maxd}(\cluster)$ & The maximum distance between the cluster representative of $\cluster$ and one of its descendants in an SG Tree. \\ 
        $\textsc{mind}(\cluster)$ & The minimum distance between one of the pairs of children of $\cluster$ in an SG Tree. \\ 
        \midrule
        $\gamma$ & The upper bound on the ratio of the true softmax distribution, $\max_y \frac{P(y|x)}{Q(y|x;\partition)} \leq \gamma$ \\
        $w$ & Number of steps of gradient descent. \\
        $s$ & The Metropolis Hastings chain length \\
        $\eta$ & Gradient descent learning rate \\
        $d_{max}$ & Maximum pairwise distance among all pairs of points \\
        $d_{min}$ & Minimum pairwise distance among all pairs of points \\
        \midrule
        $\mathbf{S}$ & Binary matrix representing landmarks for \nystrom \\
        $\mathscr{S}$ & Set of landmarks for \nystrom \\
        $\mathbf{K}$ & Pairwise similarity matrix for \nystrom \\
        $\mathscr{R}$ & Low-rank (\nystrom) regression model to approximately re-encode the targets \\
        $Y'_{S'_t}$ & The low dimensional representations of the training points for $\mathscr{R}$. Not to be confused \\
        &  with $Y_{(\ell)}$, the representatives in a level of the cover tree.  \\
         \bottomrule
    \end{tabular}
    \caption{\textbf{Summary of Notation Used}}
    \label{tab:notation_table}
\end{table}

\end{document}